\newcommand{\shortversion}[1]{}
\newcommand{\longversion}[1]{#1}
\newcommand{\Nat}{\mathbb{N}}
\newcommand{\CCard}[1]{\|#1\|}
\theoremstyle{plain}
\newtheorem{theorem}{Theorem}
\newtheorem{lemma}[theorem]{Lemma}
\newtheorem{corollary}[theorem]{Corollary}
\newtheorem*{theorem*}{Theorem}
\newtheorem*{fact*}{Fact}
\theoremstyle{definition}
\newtheorem{definition}[theorem]{Definition}
\newtheorem*{definition*}{Definition}
\theoremstyle{remark}
\newcommand {\bigO} {{\mathcal O}}
\newcommand {\card} [1] {\left | #1 \right |}
\newcommand {\ra} {\rightarrow}
\newcommand {\abdinst} [1] {\ensuremath{\langle}#1\ensuremath{\rangle}\xspace}
\newcommand {\cB} {\mathcal{B}}
\newcommand {\cC} {\mathcal{C}}
\newcommand {\cP} {\mathcal{P}}
\newcommand {\horn}      {\textsc{Horn}\xspace}
\newcommand {\krom}      {\textsc{Krom}\xspace}
\newcommand {\cnf}      {\textsc{Cnf}\xspace}
\newcommand {\sol} {\mathit{Sol}}
\newcommand {\var}[1]{\ensuremath{\textsl{var}\,(#1)}}
\renewcommand {\Res} {\mathit{Res}}
\newcommand {\TrimRes} {\mathit{TrimRes}}
\newcommand{\ccfont}[1]{\textsf{#1}}
\newcommand{\NP}{\textnormal{\ccfont{NP}}\xspace}
\newcommand{\coNP}{\textnormal{\ccfont{co-NP}}\xspace}
\newcommand{\Ptime}{\textnormal{\ccfont{P}}\xspace}
\newcommand{\SigmaTwo}{\ccfont{\ensuremath{\mathsf{\Sigma}_2^\Ptime}}\xspace}
\newcommand{\paraNP}{\textsf{para-NP}\xspace}
\newcommand{\abd}{\textsc{Abd}\xspace}
\newcommand{\sat}{\textsc{Sat}\xspace}
\newcommand{\qbf}{\textsc{Qbf}\xspace}
\newcommand{\abdmin}{\textsc{Abd}$_\subseteq$\xspace}
\newcommand{\ta}[1]{\text{\normalfont \textsf{ta}(\ensuremath{#1})}}
\newcommand{\true}{\textit{true}\xspace}
\newcommand{\false}{\textit{false}\xspace}
\newcommand{\TR}{\mathit{TR}}
\newcommand{\parhead}[1]{\smallskip\noindent\textbf{#1.}\ }
\newcommand{\parhead}[1]{\paragraph{#1.}}
\newlength{\algorithmwidth}
\newlength{\algorithmremainder}
\patchcmd{\@algocf@start}{%
  \begin{lrbox}{\algocf@algobox}%
}{%
  \rule{0.5\algorithmremainder}{\z@}%
  \begin{lrbox}{\algocf@algobox}%
  \begin{minipage}{\algorithmwidth}%
}{}{}
\patchcmd{\@algocf@finish}{%
  \end{lrbox}%
}{%
  \end{minipage}%
  \end{lrbox}%
}{}{}}
\renewenvironment{proof}{\vspace{-3mm}\begin{pf}}{\qed\end{pf}}
\title{Backdoors to Abduction}
  \author{%
    Andreas  Pfandler\thanks{Supported by the Austrian Science Fund (FWF):
      P25518-N23.} 
    \qquad
    Stefan R\"ummele\footnotemark[1] 
    \qquad
    Stefan  Szeider\thanks{Supported by the European Research Council (ERC),
      project 239962.}\\[4pt]
 Vienna University of Technology, Austria\\
 \{pfandler, ruemmele\}@dbai.tuwien.ac.at, stefan@szeider.net
} 
  \author{%
    Andreas  Pfandler\thanks{Supported by the Austrian Science Fund (FWF):
      P25518-N23.} \qquad
    Stefan R\"ummele\footnotemark[1] 
    \qquad
    Stefan  Szeider\thanks{Supported by the European Research Council (ERC),
      project 239962.}\\[4pt]
\small Vienna University of Technology, Austria\\[-2pt]
\small \{pfandler, ruemmele\}@dbai.tuwien.ac.at, stefan@szeider.net
} 
\date{}
\begin{document}
\maketitle
\longversion{\thispagestyle{empty}}

\begin{abstract}
 \longversion{\noindent}Abductive reasoning (or Abduction, for short) is among the most
  fundamental AI reasoning methods, with a broad range of
  applications, including fault diagnosis, belief revision, and
  automated planning.  Unfortunately, Abduction is of high
  computational complexity; even propositional Abduction is
  \SigmaTwo-complete and thus harder than \NP and \coNP.  This
  complexity barrier rules out the existence of a polynomial
  transformation to propositional satisfiability (\sat).  In this work
  we use structural properties of the Abduction instance to break this
  complexity barrier. We utilize the problem structure in terms of
  small backdoor sets.  We present fixed-parameter tractable
  transformations from Abduction to \sat, which make the power of today's \sat solvers available to Abduction.%
\end{abstract}

\section{Introduction}

Abductive reasoning (or \emph{Abduction}, for short) is among the most
fundamental reasoning methods.  It is used to explain observations by
finding appropriate causes.  In contrast to deductive reasoning, it is
therefore a method for ``reverse inference''.
Abduction has a broad
range of applications in AI, including system and medical
diagnosis, planning, configuration, and database
updates\longversion{~\cite{BylanderATJ91,NGM92,Pople73}}.

Unfortunately, Abduction is of high computational complexity. Already
propositional Abduction, the focus of this paper, is
\SigmaTwo-complete and thus located at the second level of the
Polynomial Hierarchy~\cite{EiterG95}. Consequently, Abduction is
harder than \NP and \coNP.  This complexity barrier rules out the
existence of a polynomial-time transformation to the propositional
satisfiability problem (\sat).  As a consequence, one cannot directly
apply powerful \sat solvers to Abduction.  However, this would be very
desirable in view of the enormous power of state-of-the-art \sat
solvers that can handle instances with millions of clauses and
variables~\cite{GomesKSS08,KatebiEtal11,JarvisaloEtal2012}.\smallskip

\parhead{Main contribution} We present a new approach to utilize
problem structure in order to break this complexity barrier for Abduction.
More precisely, we present transformations from Abduction to \sat that
run in quadratic time, with a constant factor that is exponential in
the distance of the given propositional theory from being \horn or
\krom. We measure the distance in terms of the size of a smallest
\emph{strong backdoor set} into the classes \horn or \krom,
respectively \cite{WilliamsGomesSelman03}.
Thus the exponential blow-up---which is to be expected when
transforming a problem from the second level of the Polynomial
Hierarchy into \sat---is confined to the size of the backdoor set,
whereas the order of the polynomial running time remains constant
independent of the distance. Such transformations are known as
\emph{fixed-parameter tractable reductions} and are fundamental to
Parameterized Complexity Theory~\cite{book/DowneyF99,book/FlumG06}. 

Our new approach to Abduction has several interesting aspects.  It
provides flexibility and openness. Any additional constraints
that one might want to impose on the solution to the Abduction problem
can simply be added as additional clauses to the \sat encoding. Hence
such constraints can be handled without the need for modifying the
basic transformation.  The reduction approach readily supports the
enumeration of subset-minimal solutions, as we can delegate the
enumeration to the employed \sat solver.  Any
progress made for \sat solvers directly translates into progress
for Abduction.

As a by-product, our approach gives rise to several new
fixed-parameter tractability results. For instance, Abduction
is fixed-parameter tractable when parameterized by the number of
hypotheses and the size of a smallest \horn- or \krom-backdoor
set. Parameterized by the size of backdoor sets alone, Abduction
is \paraNP-complete.

\parhead{Related Work} Methods from parameterized complexity have
turned out to be well-suited to tackle hard problems in Knowledge Representation \&
Reasoning~\cite{GottlobS08}.  In particular, the concept of backdoor
sets provides a natural way of parameterizing such
problems~\cite{SamerS09,FichteS11,GaspersS12,GaspersSzeider12,LacknerP12-closedworld,LacknerP12-minimalmodels,DvorakOrdyniakSzeider12}.
The parameterized complexity of Abduction was subject of previous
work, where different parameters have been considered
\cite{GottlobPW10,FellowsPRR12}. The most significant difference to
our work is that we use fixed-parameter tractability not to \emph{solve} the
Abduction problem itself, but to \emph{reduce} it from the second level of
the Polynomial Hierarchy to the first. This way, our parameters can be
less restrictive and are potentially small for larger classes of
Abduction instances.
This novel use of fixed-parameter tractability was recently applied in
the domain of answer-set programming~\cite{FichteS13}, and we believe
it can be 
applied to other hard reasoning problems as well.  Abduction can be
transformed in polynomial time to \qbf~\cite{EglyETW00}, but then one
remains on the second level of the Polynomial Hierarchy.

\section{Preliminaries}
\parhead{Propositional Logic}
A formula in conjunctive normal form is a conjunction of disjunctions
of literals; we denote the class of all such formulas by \cnf.  It is
convenient to view a formula in \cnf\ also as a set of clauses and a
clause as a set of literals.  \krom\ denotes the class of all \cnf\
formulas having clause size at most~$2$.  \horn\ formulas are \cnf\
formulas with at most one positive literal per clause.
Let $\var{\varphi}$ be the set of variables occurring in $\varphi\in\cnf$.

A (partial) \emph{truth assignment} is a mapping $\tau:X \rightarrow
\{0,1\}$ defined for a set $X$ of variables.  We write $\var{\tau}$ to
denote the domain $X$.  To extend $\tau$ to literals we put $\tau(\neg
x)=1 - \tau(x)$ for $x\in X$.  By $\ta{X}$ we denote the set of all
truth assignments $\tau: X\rightarrow \{0,1\}$.  Let $S$ be a set of
variables.  We denote by $\ta{X,S}$ the set $\{\tau \in \ta{X} \mid
\forall s\in S \cap X: \tau(s)=1\}$.  We say that $\tau$ satisfies
literal $l$ if $\tau(l) = 1$.  A clause is tautological if it contains
a variable $x$ and its negation $\neg x$.  A truth assignment $\tau$
\emph{satisfies} a \cnf\ formula if in each non-tautological clause,
there exist a literal that is satisfied  by $\tau$.
A \cnf\ formula $\varphi$ is \emph{satisfiable} (or \emph{consistent}) if there exists some truth assignment $\tau$ that satisfies $\varphi$.
If, additionally, $\var{\tau}$ contains all variables of $\varphi$, we call it a \emph{model} of $\varphi$.
The \emph{truth assignment reduct} of a \cnf\ formula
$\varphi$ under a truth assignment $\tau$ is the \cnf\
formula $\varphi[\tau]$ obtained from $\varphi$ by deleting all clauses
that are satisfied by~$\tau$ and by deleting from the remaining
clauses all literals that are set to $0$ by~$\tau$. 
Let $\varphi,\psi\in\cnf$, then $\varphi$ entails $\psi$ (denoted by $\varphi\vDash\psi$) if every model $\tau$ of $\varphi$ with $\var{\psi}\subseteq \var{\tau}$ is also a model of $\psi$.

Let $\Res(\varphi)$ denote the set of clauses computed from $\varphi\in\cnf$ by iteratively applying resolution and dropping tautological clauses until a fixed-point is reached.
Applying resolution adds the clause $C\cup D$ to $\varphi$ if $C\cup\{x\}\in \varphi$ and $D\cup\{\neg x\}\in\varphi$.
Let $C$ be a non-tautological clause then $C\in\Res(\varphi)$ if and only if $\varphi\vDash \{C\}$.
For more information, cf.~\cite{Leitsch97}.

Let $X$ be a set of variables, we define $\overline{X} \coloneqq \{\neg x\mid x\in X\}$.
\longversion{Furthermore, let $\tau_1$ and $\tau_2$ be truth assignments and $X$ a set of variables.
Then we denote by $\tau_1\sqsubseteq\tau_2$ that $\var{\tau_1}\subseteq\var{\tau_2}$ and by $\tau_1\sqcap X$ the restriction of the assignment $\tau$ to the domain $X$.}
For convenience, we view a set of variables $X$ as the conjunction of its elements, whenever it is used as a formula.\smallskip

\parhead{Propositional Abduction}
A \emph{(propositional) abduction instance} consists of a tuple $\langle V, H, M, T\rangle$, where $V$ is the set of \emph{variables}, $H \subseteq V$ is the set of \emph{hypotheses}, $M \subseteq V$ is the set of \emph{manifestations}, and $T$ is the \emph{theory}, a formula in \cnf over $V$. It is required that $M\cap H=\emptyset$.
We define the size of an abduction instance $\cP$ to be the size of a reasonable encoding of $\cP$.
For instance taking $\card{V} + \card{H} + \card{M} + \sum_{C \in T} \card{C}$ would do.
A set $S \subseteq H$ is a \emph{solution} to (or explanation of)  $\cP$ if $T \cup S$ is consistent and $T \cup S \models M$ (entailment).
$\sol(\cP)$ denotes the set of all solutions to $\cP$.
The \emph{solvability problem for propositional abduction} \abd is the following problem:

\begin{center}
\shortversion{\renewcommand{\tabcolsep}{4px}}
\shortversion{\begin{tabular}{|r p{0.75\columnwidth}|}}
\longversion{\begin{tabular}{|r p{0.52\columnwidth}|}}
\hline
\multicolumn{2}{|l|}{\abd} \\
\textit{Instance:} & An abduction instance $\cP$. \\
\textit{Problem:} & Decide whether $\sol(\cP) \neq \emptyset$.\\
\hline
\end{tabular}
\end{center}\longversion{\medskip}

\shortversion{\noindent}
We will additionally consider a version of the abduction problem where
we search for certain solutions only.  A solution $S$ is
\emph{subset-minimal} if there is no solution $S'\subsetneq S$.\longversion{\medskip}
\begin{center}
\shortversion{\renewcommand{\tabcolsep}{4px}}
\shortversion{\begin{tabular}{|rp{0.75\columnwidth}@{\hskip 6px}|}}
\longversion{\begin{tabular}{|rp{0.52\columnwidth}@{\hskip 6px}|}}
\hline
\multicolumn{2}{|l|}{\abdmin} \\
\textit{Instance:} & An abduction instance $\cP$ and a hypothesis $h$. \\ %
\textit{Problem:} & Is there a \emph{subset-minimal} solution $S\subseteq H$ to $\cP$ with $h\in S$.\\
\hline
\end{tabular}
\end{center}\longversion{\medskip}

Recall that \abd is \SigmaTwo-complete in general while it becomes \NP-complete when the theory is a \horn or \krom formula.
The same holds for \abdmin~\cite{EiterG95,NordhZ08,SelmanL90}.\smallskip

\parhead{Example} %
In this example we look for explanations why the nicely planned skiing trip ended up so badly.
The abduction instance is given as follows:
\begin{align*}
  V &= \{ \mathtt{snows}, \mathtt{rains}, \mathtt{precipitation}, \mathtt{warm}, \mathtt{hurt}, \mathtt{sad} \}\\
  H &= \{ \mathtt{precipitation}, \mathtt{warm}, \mathtt{hurt} \}\\
  M &= \{  \mathtt{sad} \}\\
  T &= \{\begin{aligned}[t]
    & \mathtt{precipitation}\ra\mathtt{rains}\vee\mathtt{snows}, \\
    & \mathtt{hurt}\ra \mathtt{sad}, \mathtt{warm}\ra\neg\mathtt{snows}, \mathtt{rains}\ra\mathtt{sad} \}
  \end{aligned}
\end{align*}
One can easily verify that $S_1=\{\mathtt{hurt}\}$ and $S_2=\{\mathtt{precipitation},\mathtt{warm}\}$ are solutions to this abduction instance.
Notice that $S_1$ and $S_2$ are the only subset-minimal solutions while, e.g., $S_3=\{\mathtt{hurt},\mathtt{warm}\}$ is a solution as well.\smallskip

\parhead{Parameterized Complexity}
We give some basic background on
parameterized complexity. For more detailed information we refer to
other sources~\cite{book/DowneyF99,book/FlumG06}.  A \emph{parameterized
  problem} $L$ is a subset of $\Sigma^* \times \Nat$ for some finite
alphabet $\Sigma$. For an instance $(I,k) \in \Sigma^* \times \Nat$ we
call $I$ the \emph{main part} and $k$ the \emph{parameter}. $L$ is
\emph{fixed-parameter tractable} if there exists a computable function
$f$ and a constant~$c$ such that there exists an algorithm that decides
whether $(I,k)\in L$ in time $\bigO(f(k)\CCard{I}^c)$ where $\CCard{I}$
denotes the size of~$I$.  Such an algorithm is called an
\emph{fpt-algorithm}.
We will use the $\bigO^*(\cdot)$ notation which is defined in the same way as $\bigO(\cdot)$, but ignores polynomial factors.

Let $L \subseteq \Sigma^* \times \Nat$ and $L' \subseteq \Sigma'^*\times
\Nat$ be two parameterized problems for some finite alphabets $\Sigma$
and $\Sigma'$. An \emph{fpt-reduction} $r$ from $L$ to $L'$ is a
many-to-one reduction from $\Sigma^*\times \Nat$ to $\Sigma'^*\times
\Nat$ such that for all $I \in \Sigma^*$ we have $(I,k) \in L$ if and
only if $r(I,k)=(I',k')\in L'$.
Thereby, $k' \leq g(k)$ for a fixed
computable function $g: \Nat \rightarrow \Nat$, and there is a computable
function $f$ and a constant $c$ such that $r$ is computable in time
$\bigO(f(k)\CCard{I}^c)$ where $\CCard{I}$ denotes the size
of~$I$~\cite{book/FlumG06}. Thus, an fpt-reduction is, in particular, an
fpt-algorithm.
We would like to note that the theory of fixed-parameter
intractability is based on fpt-reductions.\smallskip

\parhead{Backdoors}
\citex{WilliamsGomesSelman03} introduced the notion of \emph{backdoors}
to explain favorable running times and the heavy-tailed behavior of \sat
and CSP solvers on practical instances.  Backdoors are defined with
respect to a fixed class $\cC$ of \cnf formulas, the \emph{base class}
(or more figuratively, \emph{island of tractability}).
 A \emph{strong} \emph{$\cC$-backdoor set} of a formula $\varphi\in\cnf$ is
a set $\cB$ of variables such that $\varphi[\tau]\in \cC$ for each $\tau \in \ta{\cB}$.
$\cB$ is also called a strong $\cC$-backdoor set of an abduction
instance $\cP=\abdinst{V,H,M,T}$ if $\cB$ is a strong $\cC$-backdoor
set of~$T$.  Observe that the instance from the example above has a
strong \horn-backdoor set and a strong \krom-backdoor set of
cardinality one (consider, e.g., $\cB=\{\mathtt{snows}\})$.

\parhead{Backdoor Approach}
The backdoor approach consists of two phases.
First, a backdoor set is computed (\emph{detection}) and afterwards the backdoor is used to solve the problem (\emph{evaluation}).
For example, for \sat this approach works as follows.
If we know a strong $\cC$-backdoor set of a \cnf\ formula $\varphi$ of size $k$, we can
reduce the satisfiability of $\varphi$ to the satisfiability of $2^k$ easy
formulas that belong to the base class. The challenging problem,
however, is to find a strong backdoor set of size at most $k$, if it
exists. This problem is \NP-hard for all reasonable base classes, but
fortunately, fixed-parameter tractable for the base classes \krom and
\horn if parameterized by~$k$~\cite{NishimuraRagdeSzeider04-informal}. 
In particular, efficient fixed-parameter algorithms for the
\textsc{3-Hitting Set} problem and for the \textsc{Vertex Cover} problem
can be used for detecting strong backdoor sets for the base classes
\krom and \horn. Fastest known fixed-parameter algorithms for these two
problems run in time $\bigO^*(2.270^k)$ and $\bigO^*(1.2738^k)$
\cite{NiedermeierRossmanith03,ChenKanjXia10}, respectively.
For further information on the parameterized complexity of backdoor set detection we refer to a recent survey~\cite{GaspersSzeider12}.

For Abduction the detection phase is the same as for \sat, but the
evaluation phase becomes the new challenge. We therefore focus on the
evaluation phase, and assume that the backdoor set is provided as part
of the input. However, whether or not we provide the backdoor set as
part of the input does not affect the parameterized complexity of the
overall problem, since, as explained above, the detection of strong
\horn/\krom-backdoors is fixed-parameter tractable.

\section{Transformations using Horn Backdoors}
\label{sec:horn}

In this section we present the transformation from \abd to \sat using strong \horn-backdoor sets.
In our transformation we will build upon ideas from~\citex{DowlingG84} for computing the unique minimal model (with respect to set-inclusion) of \horn formulas in linear time.
Recall that manifestations in the considered Abduction formalism are
assumed to be positive literals (i.e., variables).
The following lemma captures the evaluation phase of our backdoor approach.
\shortversion{The proof can be found in the full version~\cite{arxiv-version}.}

\begin{lemma}
  \label{lem:bds-solv}
  Let $\cP=\abdinst{V,H,M,T}$ be an abduction instance, let $S \subseteq H$, and let $\cB\subseteq V$. %
  Then $S$ is a solution to $\cP$ if and only if
  \longversion{\begin{enumerate}[(i)]}
  \shortversion{\begin{inparaenum}[(i)]}
  \item $\exists \tau \in \ta{\cB,S}$ such that $T[\tau] \cup S$ is consistent, and
  \item $\forall \tau \in \ta{\cB,S}$, $T[\tau] \cup S\vDash M[\tau]$.
  \shortversion{\end{inparaenum}}
  \longversion{\end{enumerate}}
\end{lemma}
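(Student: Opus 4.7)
My plan rests on a simple bijective correspondence between full truth assignments $\mu$ over $V$ and pairs $(\tau, \mu')$ where $\tau = \mu|_{\cB} \in \ta{\cB}$ and $\mu' = \mu|_{V \setminus \cB}$. The core auxiliary fact, immediate from the definition of the truth assignment reduct, is that $\mu \vDash T \cup S$ iff $\tau \in \ta{\cB, S}$ (so that $\mu$ sets every $s \in S \cap \cB$ to $1$) and $\mu'$ together with $\tau|_{S \cap \cB}$ satisfies $T[\tau] \cup S$; analogously $\mu \vDash M$ iff $\tau(m) = 1$ for every $m \in M \cap \cB$ and $\mu' \vDash M[\tau]$. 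I would record this fact once and invoke it in both directions.

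For the forward direction I assume $S$ is a solution and pick a model $\mu$ of $T \cup S$. Setting $\tau := \mu|_{\cB}$ immediately gives $\tau \in \ta{\cB, S}$ and shows that $\mu$ itself witnesses consistency of $T[\tau] \cup S$, proving (i). For (ii), take any $\tau \in \ta{\cB, S}$ and any model $\mu'$ of $T[\tau] \cup S$ with $\var{M[\tau]} \subseteq \var{\mu'}$; the gluing $\mu := \tau \cup \mu'|_{V \setminus \cB}$ is a model of $T \cup S$ by the auxiliary fact, so $\mu \vDash M$ by entailment, whence $\mu' \vDash M[\tau]$ again by the auxiliary fact. The backward direction uses the same machinery symmetrically: (i) plus gluing yields a model of $T \cup S$, giving consistency; and for $T \cup S \vDash M$ I take any model $\mu$ of $T \cup S$ with $\var{M} \subseteq \var{\mu}$, set $\tau := \mu|_{\cB}$, observe $\tau \in \ta{\cB, S}$ and that $\mu$ is already a model of $T[\tau] \cup S$, and apply (ii) to obtain $\mu \vDash M[\tau]$, which the auxiliary fact lifts to $\mu \vDash M$.

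The main subtlety I expect to handle carefully is the behaviour of the reduct on manifestations lying in $\cB$: if $\tau(m) = 0$ for some $m \in M \cap \cB$, the unit clause $\{m\}$ reduces to the empty clause, so $M[\tau]$ is unsatisfiable and the entailment in (ii) collapses to a consistency statement about $T[\tau] \cup S$. This edge case is resolved by the same gluing argument: any putative model of $T[\tau] \cup S$ would produce, together with $\tau$, a model of $T \cup S$ falsifying $m$, contradicting $T \cup S \vDash M$. Consequently (ii) holds vacuously in the forward direction whenever $\tau$ violates a manifestation in $\cB$, and in the backward direction the same reasoning forces $\tau(m) = \mu(m) = 1$ for every $m \in M \cap \cB$, which is exactly what is needed to lift $\mu \vDash M[\tau]$ to $\mu \vDash M$ and close the proof.
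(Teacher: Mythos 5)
Your proposal is correct and follows essentially the same route as the paper's proof: both directions rest on splitting a total assignment over $V$ into its restriction $\tau$ to $\cB$ and the remainder, and on gluing a $\tau\in\ta{\cB,S}$ together with a model of $T[\tau]\cup S$ back into a model of $T\cup S$. The only difference is presentational --- you argue the entailment claims directly via your auxiliary fact and treat the case $\tau(m)=0$ for $m\in M\cap\cB$ explicitly, whereas the paper argues by contradiction using the clause $\overline{M}$, whose reduct absorbs that edge case implicitly.
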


\longversion{\begin{proof}
  We start with the ``$\Rightarrow$'' direction.
  Assume that that $S$ is a solution to $\cP$.
  Therefore, $T\cup S$ must be consistent and $T\cup S\vDash M$ must hold.

  We first show that there is a $\tau \in \ta{\cB,S}$ such that $S\cup T[\tau]$ is consistent.
  Let $\tau_V$ be an assignment under which $T\cup S$ evaluates to \true.
  Such an assignment can be found, since $T\cup S$ is consistent.
  Notice that each $h\in S$ must be set to true by $\tau_V$ in order to satisfy $T\cup S$.
  We define $\tau$ to be the assignment $\tau_V$ restricted to variables in $\cB$.
  Hence it must be that case that $\tau \in \ta{\cB,S}$.
  Now, since $\tau_V$ is a model of $T\cup S$ and $\tau\sqsubseteq \tau_V$, $S\cup T[\tau]$ is satisfiable.

  Next, we show that $\forall \tau \in \ta{\cB,S}$, $S\cup T[\tau]\vDash M[\tau]$.
  Assume towards a contradiction that there is a $\tau \in \ta{\cB,S}$ such that $S\cup T[\tau]\nvDash M[\tau]$.
  This $\tau$ must set all $h\in S$ to true, since otherwise it would not be contained in $\ta{\cB,S}$.
  From $S\cup T[\tau]\nvDash M[\tau]$ we know that there is an assignment $\tau'$ that satisfies $\varphi\colonequals S\cup (T \cup \overline{M})[\tau]$. %
  The assignment $\tau'$ must set all variables in $S$ to true, since otherwise it could not satisfy the subformula $S$ in $\varphi$.
  Observe that $\var{\tau}\cap\var{\tau'}\subseteq S$.
  Therefore, we can construct the combination of $\tau$ and $\tau'$ as follows:
  \[\tau^*(x)\colonequals\begin{cases}
    \tau(x) & \text{if } x\in (\var{\tau}\setminus S)\\
    \tau'(x) & \text{if } x\in (\var{\tau'}\setminus S)\\
    \text{true} & \text{otherwise (i.e., $x\in S$)} %
  \end{cases}\]
  It is easy to verify that $\tau^*$ is a model of $S\cup T\cup \{\overline{M}\}$.
  This is a contradiction to the assumption of $T\cup S\vDash M$ and $S$ being a solution.

  It remains to show the ``$\Leftarrow$'' direction.
  Suppose that both condition (i) and (ii) are satisfied by $S$.
  We need to show that $S$ is a solution to $\cP$.

  As condition (i) is fulfilled we know that there is an assignment $\tau \in \ta{\cB,S}$ such that there is an assignment $\tau'$ satisfying $S\cup T[\tau]$.
  Let the assignment $\tau^*$ be defined as above.
  Remember that all $h\in S$ are set to true in $\tau^*$.
  Then, $\tau^*$ must be a satisfying assignment of $S\cup T$ and hence $S\cup T$ is consistent.
  This is because $S$ is trivially fulfilled and because $T$ must be satisfied by $\tau^*$.
  Otherwise $\tau'$ would not be a model of $S\cup T[\tau]$.

  As the last step we show that $T\cup S\vDash M$ is indeed fulfilled.
  We show this by contradiction.
  Assume that $S \cup T\nvDash M$.
  In other words $S \cup T \cup \overline{M}$ is satisfiable by an assignment $\tau$.
  Let now $\tau_1\colonequals \tau \sqcap (\cB\cup S)$ and $\tau_2\colonequals \tau \sqcap V\setminus (\cB\setminus S)$.
  Then $\tau_2$ is also a model of $S\cup (T \cup \overline{M})[\tau_1]$, which is a contradiction the assumption of condition (ii) being fulfilled.
\end{proof}
}

Based on this lemma, Algorithm~\ref{alg:horn-abd-bds-checker} checks whether a given candidate is indeed a solution.

{%
\shortversion{\SetAlFnt{\small}}
\SetKwFor{OwnRepeat}{repeat}{times}{end}
\SetKwInOut{Input}{Input}
\SetKwInOut{Output}{Output}
\SetKw{Break}{break}
\begin{algorithm}[ht]
  \DontPrintSemicolon
  \Input{An abduction instance $\cP=\abdinst{V,H,M,T}$, a strong \horn-backdoor set $\cB$ of $\cP$ and a solution candidate $S\subseteq H$ to be checked.}
  \Output{Decision whether $S$ is a solution to $\cP$.}
  \BlankLine
  consistent$\gets$ \textit{false}\;
  entailment $\gets$ \textit{true}\;

  \ForEach{$\tau \in \ta{\cB,S}$}
  {
    \If{$\neg$ \textnormal{consistent}}
    {
      \If{$T[\tau] \cup S$ is consistent}
      {
        consistent $\gets$ \textit{true}\;
      }
    }
    \If{$T[\tau] \cup S$ is consistent}
    {%
      U$\gets$ the unique minimal model of $T[\tau]\cup S$\;
      \If(\tcp*[f]{Thus $T[\tau] \cup S \nvDash M[\tau]$}\longversion{\hspace{3cm}}){$U\nvDash
        M[\tau]$}
      {
        entailment $\gets$ \textit{false}\;
        \Break
      }
    }
  }
  \Return{\textnormal{consistent} $\wedge$ \textnormal{entailment}}

\caption{Solution-Checker$_\text{\,\horn-bds}$}
\label{alg:horn-abd-bds-checker}
\end{algorithm}
} %

\begin{lemma}\label{lem:horn-alg-corr}
Let $\cP=\abdinst{V,H,M,T}$ be an abduction instance and $\cB$  a strong \horn-backdoor set of $\cP$. A set $S \subseteq H$ is a solution to $\cP$ if and only if Algorithm~\ref{alg:horn-abd-bds-checker} returns yes.
\end{lemma}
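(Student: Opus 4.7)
The plan is to reduce Lemma~\ref{lem:horn-alg-corr} directly to Lemma~\ref{lem:bds-solv} by matching the two Boolean flags maintained by Algorithm~\ref{alg:horn-abd-bds-checker} to the two conditions of that lemma: \textit{consistent} should be \true after the loop iff condition~(i) holds, and \textit{entailment} should be \true iff condition~(ii) holds. If this correspondence is established, then the algorithm's return value equals the conjunction of (i) and (ii), which by Lemma~\ref{lem:bds-solv} is equivalent to $S$ being a solution to~$\cP$.

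First I would establish the structural invariant that $T[\tau]\cup S\in\horn$ for every $\tau\in\ta{\cB,S}$. Since $\cB$ is a strong \horn-backdoor set of $\cP$, we have $T[\tau]\in\horn$ by definition, and adjoining the positive unit clauses coming from $S$ preserves Horn-ness. This legitimises the inner loop: by the Dowling--Gallier result, every satisfiable Horn formula admits a unique subset-minimal model $U$, computable in linear time. The \textit{consistent} flag is then easy: the outer loop ranges over all of $\ta{\cB,S}$ and flips the flag to \true precisely when some $\tau$ yields a consistent $T[\tau]\cup S$, which is exactly condition~(i).

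The core of the argument is the entailment step. The plan is to use the standard Horn fact that, for a conjunction $M'$ of positive literals, $\varphi\vDash M'$ iff $\varphi$ is inconsistent or the unique minimal model $U$ of $\varphi$ satisfies~$M'$. Applied with $\varphi=T[\tau]\cup S$ and $M'=M[\tau]$, this is exactly what the algorithm checks: when $T[\tau]\cup S$ is inconsistent, entailment holds vacuously and the flag is left \true; when consistent, the algorithm computes $U$ and tests $U\vDash M[\tau]$, flipping \textit{entailment} to \false iff this fails. Ranging over all $\tau\in\ta{\cB,S}$ then produces \textit{entailment} = \true iff condition~(ii) is fulfilled.

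The main obstacle will be the degenerate case where some variable in $M\cap\cB$ is assigned $0$ by $\tau$, so that $M[\tau]$ collapses to the constant \false (since $M$ is viewed as the conjunction of its variables). Here $\varphi\vDash M[\tau]$ forces $\varphi$ inconsistent, so condition~(ii) at this $\tau$ can hold only via inconsistency; I need to verify that Algorithm~\ref{alg:horn-abd-bds-checker} reaches the correct verdict, namely that inside the \textit{consistent} branch the test $U\nvDash M[\tau]$ trivially succeeds (no assignment satisfies \false), while inside the inconsistent branch the flag is correctly left untouched. Once this corner is checked, the equivalences above assemble into the if-and-only-if of the lemma.
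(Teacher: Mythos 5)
Your proposal is correct and follows essentially the same route as the paper: both reduce the claim to Lemma~\ref{lem:bds-solv} by showing the \textit{consistent} flag tracks condition~(i) and the \textit{entailment} flag tracks condition~(ii), with the key step being the Horn fact that a conjunction of positive literals is entailed iff the theory is inconsistent or its unique minimal model satisfies it. Your explicit treatment of the case where $M[\tau]$ becomes unsatisfiable (some $m\in M\cap\cB$ set to $0$ by $\tau$) is a detail the paper leaves implicit, but it resolves exactly as you describe.
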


\begin{proof}
  $(\Rightarrow)$
  Assume there exists a solution $S \in \sol(\cP)$. By Lemma~\ref{lem:bds-solv} there exists a $\tau \in \ta{\cB,S}$ such that $T[\tau]\cup S$ is consistent.
  Therefore, for one of the assignments from Line~3, the flag \textit{consistent} will be set to \true in Line~6.
  Furthermore, by Lemma~\ref{lem:bds-solv}, for all $\tau \in \ta{\cB,S}$ it holds that $T[\tau] \cup S\vDash M[\tau]$.
  Therefore, Line~10 will not be reached and the flag \textit{entailment} will remain \true.
  Hence, the algorithm returns \textit{yes}.
  
  $(\Leftarrow)$
  Assume that the algorithm returns \textit{yes}.
  Therefore, Line~10 is never reached and $U \vDash M[\tau]$ for all $\tau \in \ta{\cB,S}$.
  Since $M[\tau]$ is entailed in the minimal model and contains only positive literals, it is entailed in every model and $T[\tau]\cup S\vDash M[\tau]$ for all $\tau \in \ta{\cB,S}$.
  Furthermore, there exists a $\tau \in \ta{\cB,S}$ such that Line~6 is reached and hence $T[\tau]\cup S$ is consistent.
  It follows from Lemma~\ref{lem:bds-solv} that $S \in \sol(\cP)$.
\end{proof}

\begin{corollary}
  Let $\cP=\abdinst{V,H,M,T}$ be an abduction instance and $\cB$  a
  strong \horn-backdoor set of $\cP$.  We can check whether $\cP$ has
  a solution in time $\bigO^*(2^{\card{\cB}+\card{H}})$.  Hence, \abd
  is fixed-parameter tractable when parameterized by
  $\card{\cB}+\card{H}$.
\end{corollary}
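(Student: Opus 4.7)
My plan is to observe that the corollary follows by wrapping Algorithm~\ref{alg:horn-abd-bds-checker} inside an enumeration over all candidate solutions $S \subseteq H$. Correctness of each individual check is already guaranteed by Lemma~\ref{lem:horn-alg-corr}, so the entire argument reduces to (a) justifying that this enumeration exhausts all possible solutions and (b) bounding the total running time by $\bigO^*(2^{\card{\cB}+\card{H}})$.

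For correctness, I would first note that every solution is by definition a subset of $H$, so iterating over all $2^{\card{H}}$ subsets $S \subseteq H$ and invoking Algorithm~\ref{alg:horn-abd-bds-checker} on each reports a solution iff one exists. For the running time, the outer loop contributes a factor of $2^{\card{H}}$. Inside Algorithm~\ref{alg:horn-abd-bds-checker}, the loop in Line~3 ranges over $\ta{\cB,S} \subseteq \ta{\cB}$, contributing at most $2^{\card{\cB}}$ iterations. Thus the enumeration produces the desired $2^{\card{\cB}+\card{H}}$ factor, and it remains to argue that each iteration runs in polynomial time.

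The polynomial per-iteration bound is the one step that actually uses the structure of \horn-backdoor sets. Since $\cB$ is a strong \horn-backdoor set of $T$, for every $\tau \in \ta{\cB}$ the reduct $T[\tau]$ is a \horn formula. Adding the positive unit clauses from $S$ (which we identify with the conjunction $\bigwedge_{h \in S} h$) preserves the \horn property, so $T[\tau] \cup S$ is \horn. Therefore the consistency check in Lines~5 and~7 and the computation of the unique minimal model $U$ in Line~9 can both be performed in linear time using the algorithm of \citex{DowlingG84}, and the final entailment check $U \vDash M[\tau]$ in Line~10 is trivial since $M[\tau]$ consists of positive literals and (by the reduction) possibly the empty clause.

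Combining these observations, the total time is $\bigO(2^{\card{H}} \cdot 2^{\card{\cB}} \cdot p(\size{\cP})) = \bigO^*(2^{\card{\cB}+\card{H}})$ for a polynomial $p$. Since this is an \FPT\ running time with respect to the combined parameter $\card{\cB}+\card{H}$, we conclude that \abd is fixed-parameter tractable under this parameterization. I do not expect any real obstacle: the single subtle point is making sure that adding the unit clauses for $S$ cannot break \horn-ness, which is immediate because each element of $S$ is a single positive literal.
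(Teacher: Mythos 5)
Your proposal is correct and follows exactly the paper's argument: enumerate all $2^{\card{H}}$ candidates $S\subseteq H$ and run Algorithm~\ref{alg:horn-abd-bds-checker} on each, which costs $\bigO^*(2^{\card{\cB}})$ per candidate by Lemma~\ref{lem:horn-alg-corr}. Your additional remarks (that adding positive unit clauses preserves \horn-ness and that the per-assignment work is linear via \citex{DowlingG84}) merely make explicit what the paper leaves implicit.
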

\begin{proof}
  One has to check for each of the $2^{\card{H}}$-many solution candidates $S\subseteq H$, whether $S$ is a solution to~$\cP$.
  To this end we apply Algorithm~\ref{alg:horn-abd-bds-checker}, which runs in time $\bigO^*(2^{\card{\cB}})$.
\end{proof}

If the number of hypotheses is large, this fpt-algorithm is not
efficient.  To overcome this limitation, we present next an
fpt-reduction to \sat using only the backdoor size  as the parameter.
This is the main result of this section.

\begin{theorem}
\label{thm:horn-bds-solv}
Given an abduction instance $\cP$ of input size $n$ and a strong \horn-backdoor
$\cB$ of $\cP$ of size~$k$, we can create in time $\bigO(2^k n^2)$ a \cnf formula $F_\text{\horn-Solv}$ of size $\bigO(2^k n^2)$ such that $F_\text{\horn-Solv}$ is satisfiable if and only if $\sol(\cP) \neq \emptyset$.
\end{theorem}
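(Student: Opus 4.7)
The plan is to use Lemma~\ref{lem:bds-solv} as the correctness backbone and to express both of its conditions directly in one CNF formula. Since $\cB$ is a strong \horn-backdoor set, each of the $2^k$ reducts $T[\tau]$ with $\tau\in\ta{\cB}$ is \horn, and on \horn formulas both consistency and the entailment of a positive literal are witnessed by the unique minimal model obtained by forward chaining from the positive unit facts. Hence the core of the construction is a CNF simulation of this minimal model, replicated once for each $\tau\in\ta{\cB}$.

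The variables of $F_\text{\horn-Solv}$ would be: one propositional variable $h$ per $h\in H$ encoding whether $h\in S$; for every $\tau\in\ta{\cB}$, every $x\in V$, and every level $\ell\in\{0,\dots,\card{V}\}$, a variable $d^\tau_{x,\ell}$ intended to mean ``$x$ is derivable after $\ell$ forward-chaining rounds on $T[\tau]\cup S$''; a Tseitin auxiliary $a^\tau_{C,\ell}$ for each definite clause $C$ of $T[\tau]$ and each level $\ell$, signalling that all positive premises of $C$ are already derived at level $\ell$; a flag $\mathit{bad}_\tau$ for inconsistency of the reduct; and a selector $z_\tau$ used to witness Lemma~\ref{lem:bds-solv}~(i).

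The clauses for a fixed $\tau$ would then be as follows. The values $d^\tau_{b,\ell}$ for $b\in\cB$ are hard-wired to $\tau(b)$, and $d^\tau_{x,0}$ is forced true exactly when $x$ is a positive unit of $T[\tau]$ or when $x\in H\setminus\cB$ and the corresponding selection variable is true. Monotonicity is $d^\tau_{x,\ell}\ra d^\tau_{x,\ell+1}$. For each definite clause $C=(y_1\wedge\cdots\wedge y_r\ra x)$ of $T[\tau]$ we add the Tseitin equivalence $a^\tau_{C,\ell}\leftrightarrow\bigwedge_i d^\tau_{y_i,\ell}$ and the firing rule $a^\tau_{C,\ell}\ra d^\tau_{x,\ell+1}$, and we close with the minimality axiom $d^\tau_{x,\ell+1}\ra d^\tau_{x,\ell}\vee\bigvee_{C:\mathrm{head}(C)=x} a^\tau_{C,\ell}$, which is a single CNF clause thanks to the auxiliary variables. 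The flag $\mathit{bad}_\tau$ becomes true exactly when some negative clause $\neg y_1\vee\cdots\vee\neg y_r$ of $T[\tau]$ has all $d^\tau_{y_i,\card{V}}=1$. Condition~(i) of Lemma~\ref{lem:bds-solv} is then encoded by $\bigvee_\tau z_\tau$ plus implications forbidding $z_\tau$ whenever $\mathit{bad}_\tau$ holds or some $h\in H\cap\cB$ with $\tau(h)=0$ is in $S$; condition~(ii) is encoded, for each $\tau$ and each $m\in M$, by a clause stating that if $\tau$ is compatible with $S$ then $\mathit{bad}_\tau$ holds or $m$ is true in the minimal model (trivial cases $m\in\cB$ handled inline).

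The size analysis is then routine: $\bigO(\card{V})$ variables and $\bigO(\card{T})$ clauses per level, $\bigO(\card{V})$ levels, and $2^k$ values of $\tau$ give a total of $\bigO(2^k n^2)$ variables and clauses and the same construction time. Correctness reduces to Lemma~\ref{lem:bds-solv} via the standard fact that the layer $d^\tau_{\cdot,\card{V}}$ of a Horn forward-chaining computation coincides with the unique minimal model whenever the formula is consistent. The main obstacle is the minimality axiom: without it one only obtains a lower bound on the derived atoms, which is adequate for modelling satisfiability but not entailment; introducing the $a^\tau_{C,\ell}$ auxiliaries keeps that axiom a single short clause per $x$ and $\ell$, and is thereby what preserves the $\bigO(2^k n^2)$ budget.
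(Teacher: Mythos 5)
Your construction is correct and follows essentially the same route as the paper: both simulate the Dowling--Gallier least-model computation of $T[\tau]\cup S$ in $\bigO(\card{V})$ layers for each of the $2^k$ backdoor assignments, guard each copy by the compatibility of $\tau$ with the chosen $S$ on $H\cap\cB$, detect inconsistency of the reduct via the purely negative clauses, and read entailment of $M$ off the final layer, with correctness resting on Lemma~\ref{lem:bds-solv}. The only differences are cosmetic: the paper conjoins $T$ itself to the encoding so that consistency of $T\cup S$ comes for free from the model (rather than via your selector $z_\tau$ and $\mathit{bad}_\tau$ flags), and it obtains CNF by applying Tseitin's transformation to a biconditional defining each layer, which amounts to the same auxiliary variables and minimality clauses you write out explicitly.
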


\begin{proof}
Let $\cP=\abdinst{V,H,M,T}$.
We will first construct a propositional formula $F_\text{\horn-Solv}'$ which is not in \cnf.
The required \cnf formula $F_\text{\horn-Solv}$ can then be obtained from $F_\text{\horn-Solv}'$ by means of the well-known transformation due to~\citex{Tseitin68}, in which auxiliary variables are introduced.
This transformation produces for a given propositional formula $F'$ in linear time a \cnf formula $F$ such that both formulas are equivalent with respect to their satisfiability, and the length of $F$ is linear in the length of~$F'$.

Note that in this encoding a solution $S \subseteq H$ can be obtained
by projecting a model of $F_\text{\horn-Solv}'$ to the variables in $H$, i.e., $h\in S$ if and only if $h \in H$ is true in the model.
We define
\[
  F_\text{\horn-Solv}' \coloneqq T \wedge F^\text{ent},
\]
where $F^\text{ent}$ is a formula, defined below, that checks the entailment $T \cup S \models M$.
Let $\cB_1,\dots,\cB_{2^k}$ be an enumeration of all the subsets of $\cB$.
Each subset implicitly denotes a truth assignment for $\cB$.
For each variable $v \in \cB$ and each subset $\cB_i$ with $1 \leq i \leq 2^k$, we use a propositional constant $B_i(v)$ that is true if and only if $v \in \cB_i$.
Now we can define $F^\text{ent}$:
\[
  F^\text{ent} \coloneqq \bigwedge_{1\leq i \leq 2^k} \Bigl(\bigl(\bigwedge_{h\in H\cap \cB}(h \rightarrow B_i(h))\bigr) \rightarrow F_i^\text{ent}\Bigr),
\]
where $F_i^\text{ent}$, defined below, checks entailment for the $i$-th truth assignment for $\cB$.
By Lemma~\ref{lem:bds-solv} we have to ensure entailment only for those truth assignments that match the truth value of all $h \in S$.
This is done via the implication $\bigl(\bigwedge_{h\in H\cap \cB}(h \rightarrow B_i(h))\bigr) \rightarrow F_i^\text{ent}$.
We define $F_i^\text{ent}$ using three auxiliary formulas:
\begin{align}\label{eq:krom-ent}
  F_i^\text{ent} \coloneqq (F_i^\text{lm} \wedge F_i^\text{check}) \rightarrow F_i^\text{man},
\end{align}
where $F_i^\text{lm}$ creates the least model of the \horn theory, $F_i^\text{check}$ checks whether this model satisfies all constraints of the theory, and $F_i^\text{man}$ checks if the model satisfies all manifestations.
Next we will define $F_i^\text{lm}$.
The idea behind the construction is to simulate the linear-time algorithm of \citex{DowlingG84}, where initially all variables are set to false and then a variable is flipped from false to true if and only if it is in the head of a rule where all
the variables in the rule body are true (a fact is a rule with empty body).
Once a fixed-point is reached, we have obtained the least model.
We encode this idea as follows.
Since we are interested in the least model of $T \cup S$ instead of just $T$, we initialize those variables that are contained in $S$ by setting them to true.
Let $p \coloneqq \min\{\card{T},\card{V}\}$ be the maximum number of steps after which the fixed-point is reached.
For each $i$ with $1 \leq i \leq 2^k$, we introduce a new set of variables $U_i \coloneqq \{ u_i^j[v] \mid v \in V, 0 \leq j \leq p \}$.
The intended meaning of a variable $u_i^j[v]$ is the truth value of the original variable $v$ after the $j$-th step of the computation of the least model.
The following auxiliary formulas encode this computation:
\begin{align*}
  F_i^\text{lm} &\coloneqq \bigwedge_{v \in V, 0\leq j \leq p} F_i^{(v,j)}; \\
  F_i^{(v,0)} &\coloneqq
    \begin{cases}
      u_i^0[v] \leftrightarrow h & \text{if } v=h \in H,\\
      u_i^0[v] \leftrightarrow \text{false} & \text{otherwise};
    \end{cases} \\
\shortversion{F_i^{(v,j)} &\coloneqq \begin{aligned}[t]&u_i^j[v] \leftrightarrow \Bigl(u_i^{j-1}[v] \vee \\
  &\bigvee_{\mathclap{\substack{r \in \text{Rules}(T[\cB_i]),\\v=\text{Head}(r)}}}\qquad \qquad  \bigwedge_{\mathclap{b \in \text{Body}(r)}} u_i^{j-1}[b]\Bigr) \quad (\text{for }1\leq j \leq p).\end{aligned}}
\longversion{F_i^{(v,j)} &\coloneqq u_i^j[v] \leftrightarrow \Bigl(u_i^{j-1}[v] \vee \bigvee_{\substack{r \in \text{Rules}(T[\cB_i]),\\v=\text{Head}(r)}} \ \ \bigwedge_{b \in \text{Body}(r)} u_i^{j-1}[b]\Bigr) \quad (\text{for }1\leq j \leq p).}
\end{align*}
As mentioned above, we initially set the variables to false with the exception of the hypotheses.
This is done in the formulas $F_i^{(v,0)}$.
The computation steps are represented by the formulas $F_i^{(v,j)}$.
Thereby we set a variable $u_i^j[v]$ to true if and only if it was already true in the previous step ($u_i^{j-1}[v]$), or there is a rule $r$ in $T[B_i]$ such that $\text{Head}(r)=v$ and all body variables $b \in \text{Body}(r)$ were already true in the previous step ($u_i^{j-1}[b]$).
In order to check whether the least model satisfies all the constraints (purely negative clauses), we define:
\[
  F_i^\text{check} \coloneqq \bigwedge_{C \in \text{Constr}(T[\cB_i])} \shortversion{\ }\longversion{\ \ } \bigvee_{v \in C} \neg u_i^p[v].
\]
Finally, we check whether the model satisfies all manifestations with the following formula:
\[
  F_i^\text{man} \coloneqq \bigwedge_{m \in M \setminus \cB} u_i^p[m] \wedge \bigwedge_{m \in M \cap \cB} B_i(m).
\]
It follows by Lemma~\ref{lem:horn-alg-corr} and by the construction of the auxiliary formulas that $F_\text{\horn-Solv}'$ is satisfiable if and only if $\sol(\cP) \neq \emptyset$.
Hence it remains to observe that for each $1 \leq i\leq 2^k$ the auxiliary formula $F_i^\text{lm}$ can be constructed in quadratic time, whereas the auxiliary formulas $F_i^\text{check}$ and $F_i^\text{man}$ can be constructed in linear time.
Therefore, the formula $F^\text{ent}$ can be constructed in time $\bigO(2^k n^2)$ and has size $\bigO(2^k n^2)$.
\end{proof}

\section{Transformations using Krom Backdoors}
\label{sec:krom}

Recall that (in contrast to \horn formulas) a \krom formula might have
several (subset) minimal models.  Hence we cannot use the above
approach for the base class \krom.  However, we can exploit special
properties of \krom formulas with respect to resolution.  Analogously
to Section~\ref{sec:horn} we start with an algorithm for verifying a
solution and show fixed-parameter tractability with respect to the
combined parameter backdoor size and number of
hypotheses. Subsequently we establish the main result of this section,
an fpt-reduction with backdoor size as the single parameter.

{%
\shortversion{\SetAlFnt{\small}}
\SetKwFor{OwnRepeat}{repeat}{times}{end}
\SetKwInOut{Input}{Input}
\SetKwInOut{Output}{Output}
\SetKw{Break}{break}
\begin{algorithm}[ht]
  \DontPrintSemicolon
  \Input{An abduction instance $\cP=\abdinst{V,H,M,T}$, a strong \krom-backdoor set $\cB$ of $\cP$ and a solution candidate $S\subseteq H$ to be checked.}
  \Output{Decision whether $S$ is a solution to $\cP$.}
  \BlankLine
  consistent$\gets$ \textit{false}\;
  entailment $\gets$ \textit{true}\;

  \ForEach{$\tau \in \ta{\cB,S}$}
  {
    \If{$\neg$ \textnormal{consistent}}
    {
      \If{$T[\tau] \cup S$ is consistent}
      {
        consistent $\gets$ \textit{true}\;
      }
    }
    \eIf{$M[\tau]$ is consistent} %
    {
      \ForEach{$m\in M\setminus\var{\tau}$}
      {
        \If{$(T[\tau]\nvDash m) \wedge (\forall h\in S\!\!: T[\tau]\wedge h\nvDash m)$}
        {
          entailment $\gets$ \textit{false}\;
          \Break
        }
      }
    }
    {
      \If{$T[\tau]\cup S$ is consistent}
      {
        entailment $\gets$ \textit{false}\;
        \Break
      }
    }
  }
  \Return{\textnormal{consistent} $\wedge$ \textnormal{entailment}}
\caption{Solution-Checker$_\text{\,\krom-bds}$}
\label{alg:krom-abd-bds-checker}
\end{algorithm}%
} %
\begin{lemma}
Let $\cP=\abdinst{V,H,M,T}$ be an abduction instance and $\cB$ be a strong \krom-backdoor set of $\cP$. A set $S \subseteq H$ is a solution to $\cP$ if and only if Algorithm~\ref{alg:krom-abd-bds-checker} returns yes.
\end{lemma}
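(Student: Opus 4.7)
The plan is to mirror the structure of the proof of Lemma~\ref{lem:horn-alg-corr}, reducing correctness of Algorithm~\ref{alg:krom-abd-bds-checker} to the characterization provided by Lemma~\ref{lem:bds-solv}. Concretely, I will show that the flag \emph{consistent} tracks condition~(i) of that lemma and the flag \emph{entailment} tracks condition~(ii), so that the value returned by the algorithm is exactly the characterization of $S\in\sol(\cP)$.

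The \emph{consistent} flag is handled by direct syntactic inspection of the pseudocode: Line~6 sets it to \true\ precisely when the loop encounters some $\tau\in\ta{\cB,S}$ with $T[\tau]\cup S$ consistent, which is condition~(i). For the \emph{entailment} flag I would argue per $\tau$ and case-split on whether $M[\tau]$ is consistent. If $M[\tau]$ is inconsistent (which happens exactly when $\tau$ sets some $m\in M\cap\cB$ to $0$), then $T[\tau]\cup S \vDash M[\tau]$ holds iff $T[\tau]\cup S$ is itself inconsistent, which is exactly the test executed in the \textbf{else} branch. If $M[\tau]$ is consistent, then $M[\tau]$ equals the conjunction of the positive units $\{m : m\in M\setminus\var{\tau}\}$ (manifestations in $M\cap\cB$ set to $1$ are satisfied and discarded), so $T[\tau]\cup S \vDash M[\tau]$ decomposes into checking $T[\tau]\cup S \vDash m$ for each such $m$, which is what the inner loop performs.

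The crux of the proof is therefore a Krom-specific reduction of joint-hypothesis entailment to single-hypothesis entailment: for a Krom formula $T[\tau]$, a set $S$ of positive units and a positive unit $m$,
\[
  T[\tau]\cup S \vDash m \ \Longleftrightarrow\  T[\tau]\vDash m \ \text{ or }\ \exists h\in S: T[\tau]\cup\{h\}\vDash m.
\]
The $\Leftarrow$ direction is immediate. For $\Rightarrow$ I plan to use the implication-graph view of 2-SAT: entailment of a literal from $T[\tau]$ together with added positive units corresponds to a single unit-propagation chain, starting either from $\neg m$ (yielding $T[\tau]\vDash m$) or from a single $h\in S$ (yielding $T[\tau]\cup\{h\}\vDash m$). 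Equivalently, one considers a shortest resolution derivation of $\{m\}$ from $T[\tau]\cup S$ and argues that, because every clause of $T[\tau]$ has size at most two and the units from $S$ are all positive, the derivation can be rearranged so that it consults only a single hypothesis from $S$.

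Establishing this Krom structural property is the only real obstacle; everything else is bookkeeping on top of Lemma~\ref{lem:bds-solv}. Once both flags have been matched with conditions~(i) and~(ii), the two directions follow immediately: if $S\in\sol(\cP)$ then by Lemma~\ref{lem:bds-solv} neither flag is lowered incorrectly and the algorithm returns yes, and conversely if the algorithm returns yes then both conditions of Lemma~\ref{lem:bds-solv} are met, so $S\in\sol(\cP)$.
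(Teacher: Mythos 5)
Your proposal follows essentially the same route as the paper's proof: you match the flag \emph{consistent} to condition~(i) and the flag \emph{entailment} to condition~(ii) of Lemma~\ref{lem:bds-solv}, case-split on whether $M[\tau]$ is consistent (handling the inconsistent case by the unsatisfiability test of the \textbf{else} branch), and reduce the per-manifestation entailment $T[\tau]\cup S\vDash m$ to the single-hypothesis tests of Line~9 via a \krom-specific property of resolution. The paper does exactly this, except that it does not reprove the \krom-specific step: it appeals to the paper's Lemma~\ref{lem:krom-prep-clauses} (Lemma~20 of Fellows et al.), whose content is the same as your implication-graph/derivation-rearrangement sketch.

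The one point where your write-up is not literally correct is the ``crux'' biconditional. As you state it, with no satisfiability hypothesis on $T[\tau]\cup S$, the left-to-right direction is false: take $T[\tau]=\{\{\neg h_1,\neg h_2\}\}$ and $S=\{h_1,h_2\}$; then $T[\tau]\cup S$ is unsatisfiable and hence entails every $m$, yet neither $T[\tau]$ nor $T[\tau]\cup\{h\}$ for any single $h\in S$ entails $m$ --- a refutation here genuinely needs two hypotheses, so no rearrangement of the resolution derivation can reduce it to one. The cited Lemma~\ref{lem:krom-prep-clauses} carries precisely the hypothesis that $T\wedge S$ is satisfiable, and your argument must be stated and proved under that proviso. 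This is not mere pedantry in the present setting: for a solution $S$, Lemma~\ref{lem:bds-solv} only guarantees that $T[\tau]\cup S$ is consistent for \emph{some} $\tau\in\ta{\cB,S}$, not for all of them, so the sub-case in which $M[\tau]$ is consistent but $T[\tau]\cup S$ is not (and condition~(ii) holds only vacuously for that $\tau$) can arise and must be addressed explicitly when arguing that Line~10 is never reached; the paper's own proof of the forward direction passes over this sub-case as well.
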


\longversion{\begin{proof}
  $(\Rightarrow)$
  Assume that there is a solution $S \in \sol(\cP)$.
  By Lemma~\ref{lem:bds-solv} there exists a $\tau \in \ta{\cB,S}$ such that $T[\tau] \cup S$ is consistent.
  Hence, for one of the assignments in Line~3, the variable \textit{consistent} will be set to \true in Line~6.
  Furthermore, by Lemma~\ref{lem:bds-solv}, for all $\tau \in \ta{\cB,S}$ it holds that $T[\tau] \cup S\vDash M[\tau]$.
  We distinguish between two cases.

  Case (i): $M[\tau]$ is satisfiable.
  In this case the manifestations in $M\setminus\var{\tau}$ remain to be checked.
  These manifestations are checked one by one in Line~8.
  Since each manifestation $m\in M\setminus\var{\tau}$ is entailed by $T[\tau] \cup S$ and $T[\tau]$ is a \krom formula, one can show that either $T[\tau]\vDash m$ or there is a $h\in S$ such that $T[\tau]\wedge h\vDash m$.
  Therefore, the variable \textit{entailment} is never set to \false in Line~10.
  
  Case (ii): $M[\tau]$ is not satisfiable.
  Then $T[\tau]\cup S$ cannot be satisfiable since this would contradict the assumption that $T[\tau]\cup S\vDash M[\tau]$ for all $\tau \in \ta{\cB,S}$.
  As a consequence, the variable \textit{entailment} cannot be set to \false in Line~14.

  Taken together, the algorithm returns \textit{yes}.
  
  $(\Leftarrow)$
  Assume that the algorithm returns \textit{yes}.
  This can only be the case if neither Line~10 nor Line~14 are reached.
  Therefore, if there is a $\tau$ such that $M[\tau]$ is satisfiable for each manifestation $m\in M\setminus\var{\tau}$, either $T[\tau]\vDash m$ must hold or there is a $h\in S$ such that $T[\tau]\wedge h\vDash m$.
  Conversely, if $M[\tau]$ is not satisfiable, $T[\tau]\cup S$ is not satisfiable either.
  In both cases this ensures that $T[\tau]\cup S\vDash M[\tau]$ indeed holds.
  The loop comprising Lines~3-15 makes sure that $T[\tau]\cup S\vDash M[\tau]$ for each $\tau \in \ta{B,S}$.
  In addition, there exists a $\tau \in \ta{\cB,S}$ such that Line~6 is reached and hence $T[\tau]\cup S$ must be consistent.
  It follows from Lemma~\ref{lem:bds-solv} that $S \in \sol(\cP)$.
\end{proof}}

\begin{corollary}
  Let $\cP=\abdinst{V,H,M,T}$ be an abduction instance and $\cB$ be a strong \krom-backdoor set of $\cP$.
  We can check whether $\cP$ has a solution in time $\bigO^*(2^{\card{\cB}+\card{H}})$.
  Hence, \abd is fixed-parameter tractable when parameterized by $\card{\cB}+\card{H}$.
\end{corollary}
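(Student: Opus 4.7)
The plan is to mimic the Horn-case corollary almost verbatim: brute-force enumerate all $2^{\card{H}}$ candidate solutions $S\subseteq H$, and for each invoke Algorithm~\ref{alg:krom-abd-bds-checker} to decide whether $S\in\sol(\cP)$. By the preceding lemma this decides \abd correctly, so the only thing to verify is the running time.

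First I would bound the number of iterations of the outer \textbf{foreach} in Algorithm~\ref{alg:krom-abd-bds-checker}. Since $\ta{\cB,S}\subseteq\ta{\cB}$, the loop body is executed at most $2^{\card{\cB}}$ times per candidate $S$, giving an overall count of $2^{\card{H}}\cdot 2^{\card{\cB}}$ loop executions. What then remains is to check that each loop execution runs in polynomial time in the size of $\cP$; once this is established, the total running time is $\bigO^*(2^{\card{\cB}+\card{H}})$, which proves both the running-time bound and fixed-parameter tractability with respect to $\card{\cB}+\card{H}$.

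The main point requiring attention is that the primitive tests inside Algorithm~\ref{alg:krom-abd-bds-checker} are polynomial for the \krom\ base class. Because $\cB$ is a strong \krom-backdoor set, $T[\tau]$ is a \krom\ formula for every $\tau\in\ta{\cB}$. Hence:
\begin{itemize}
\item consistency of $T[\tau]\cup S$ (lines~5 and~13) reduces to 2-SAT on $T[\tau]$ extended with unit clauses for $S$, which is decidable in linear time;
\item satisfiability of $M[\tau]$ (line~7), where $M[\tau]$ is a conjunction of unit literals, is a trivial syntactic check;
\item the entailment tests $T[\tau]\vDash m$ and $T[\tau]\wedge h\vDash m$ in line~9 are instances of implication checking for a \krom\ formula together with at most one additional unit clause, and can be carried out in polynomial time by standard 2-SAT techniques (e.g.\ by testing unsatisfiability of $T[\tau]\cup\{\neg m\}$, respectively $T[\tau]\cup\{h,\neg m\}$).
\end{itemize}
The outer \textbf{foreach} over manifestations $m\in M\setminus\var{\tau}$ contributes only a linear factor. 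Summing, each iteration of the outer loop in Algorithm~\ref{alg:krom-abd-bds-checker} is polynomial in $n$, yielding total time $\bigO^*(2^{\card{\cB}})$ per candidate and $\bigO^*(2^{\card{\cB}+\card{H}})$ overall. The mild subtlety---and the only step that differs qualitatively from the Horn analogue---is recognising that \krom\ does not admit a unique minimal model, so we must invoke the resolution/entailment argument encoded in the algorithm rather than the Dowling--Gallier construction; but this is exactly what Algorithm~\ref{alg:krom-abd-bds-checker} does, and its correctness has just been established.
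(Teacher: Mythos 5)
Your proposal is correct and matches the paper's (implicit) proof: the paper leaves this corollary without an explicit argument precisely because it is the verbatim analogue of the Horn-case corollary, namely enumerating all $2^{\card{H}}$ candidates $S\subseteq H$ and running Algorithm~\ref{alg:krom-abd-bds-checker} on each, at cost $\bigO^*(2^{\card{\cB}})$ per candidate. Your additional observations that the consistency and entailment tests reduce to polynomial-time 2-SAT checks on the \krom\ formulas $T[\tau]$ are exactly the details the paper takes for granted.
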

The transformation in the next theorem will use resolution in a preprocessing step, where we compute for each assignment $\tau\in\ta{\cB,S}$ the set containing all (non-tautological) resolvents restricted to variables in $H\cup M$.
Computing these resolvents is possible in polynomial time for \krom formulas, since resolution on a \krom\ formula always yields a \krom\ formula.
We employ the following definition and lemma.
\begin{definition}[\citex{FellowsPRR12}, Definition 17]
  \label{def:trimres}
  Given an abduction instance for \krom\ theories $\langle V, H, M, T \rangle$.
  We define the function $\TrimRes(T,H,M) \colonequals \{ C \in \Res(T) \mid C \subseteq X\}$, with $ X = H \cup M \cup \{\neg x \mid x \in (H \cup M)\}$.
  In case $\TrimRes(T,H,M)$ contains the empty clause $\Box$, we set $\TrimRes(T,H,M) \colonequals \{\Box\}$.
\end{definition}
\begin{lemma}[\citex{FellowsPRR12}, Lemma 20]
  \label{lem:krom-prep-clauses}
  Let $\langle V, H, M, T \rangle$ be an abduction instance for \krom\ theories, $S \subseteq H$, $m \in M$, and $T \wedge S$ be satisfiable.
  Then $T \wedge S \models m$ if and only if either $\{m\} \in \TrimRes(T,H,M)$ or there exists some $h \in S$ with $\{\neg h, m\} \in \TrimRes(T,H,M)$.
\end{lemma}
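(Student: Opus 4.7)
The plan is to prove each direction by invoking soundness and completeness of resolution, exploiting the fact that resolution preserves the \krom property (so every clause in $\Res(T)$ has at most two literals).

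For the ($\Leftarrow$) direction, I would argue by soundness. If $\{m\}\in\TrimRes(T,H,M)\subseteq\Res(T)$, then $T\vDash m$ and therefore $T\wedge S\vDash m$ trivially. If instead $\{\neg h,m\}\in\TrimRes(T,H,M)$ for some $h\in S$, then $T\vDash \neg h\vee m$; since $S$ entails $h$, we get $T\wedge S\vDash m$. This direction requires only routine appeals to soundness of propositional resolution and needs no use of the \krom assumption.

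The ($\Rightarrow$) direction is the heart of the argument. Assume $T\wedge S\vDash m$; then $T\wedge S\wedge\neg m$ is unsatisfiable, and by completeness of resolution there is a resolution refutation $\pi$ from $T\cup\{h:h\in S\}\cup\{\neg m\}$. The idea is to strip the use of the unit clauses $S\cup\{\neg m\}$ out of $\pi$: simulate the refutation but, whenever a unit $h\in S$ is resolved away, keep $\neg h$ in the derived clause, and similarly keep $m$ whenever $\neg m$ would have been resolved. This yields a pure resolution derivation from $T$ of some clause $C$ with $C\subseteq\{m\}\cup\{\neg h:h\in S\}$. Because $T$ is \krom and resolution on \krom clauses produces \krom clauses, we have $\card{C}\le 2$. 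Moreover $C$ must actually contain $m$: if $C\subseteq\{\neg h:h\in S\}$, then $T\wedge S$ would be unsatisfiable, contradicting the hypothesis that $T\wedge S$ is satisfiable. Hence $C$ is either $\{m\}$ or $\{\neg h,m\}$ for some $h\in S$. Since $m\in M$ and $h\in H$, all variables of $C$ lie in $H\cup M$, so $C\in\TrimRes(T,H,M)$, as required.

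The main obstacle is the ``stripping'' step in the ($\Rightarrow$) direction, where one must justify transforming a refutation of $T\cup S\cup\{\neg m\}$ into a derivation from $T$ alone of a clause whose remaining literals are exactly the literals of $S\cup\{\neg m\}$ that were resolved. This is a standard manipulation of resolution proofs, but it must be done carefully enough to conclude that the resulting clause is non-tautological and that the \krom bound on clause width is preserved throughout. Once this is in place, the restriction $C\subseteq H\cup M\cup\overline{H\cup M}$ needed for membership in $\TrimRes(T,H,M)$ follows immediately from $h\in H$ and $m\in M$.
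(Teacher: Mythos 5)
The paper does not prove this lemma at all: it is imported verbatim from Fellows et al.\ (2012, Lemma~20), so there is no in-paper proof to compare against. Your argument is correct and is the standard one for this kind of statement. The ($\Leftarrow$) direction is indeed just soundness of resolution plus $\TrimRes(T,H,M)\subseteq\Res(T)$ (note that the special case $\TrimRes(T,H,M)=\{\Box\}$ in Definition~\ref{def:trimres} cannot occur here, since $T\wedge S$ satisfiable forces $\Box\notin\Res(T)$). For ($\Rightarrow$), the ``unit-stripping'' step you identify as the crux is the usual lifting lemma underlying the completeness proof of resolution (if $T[\sigma]$ is refutable for the assignment $\sigma$ setting $S$ true and $m$ false, then $T$ derives some $C\subseteq\overline{S}\cup\{m\}$); your uses of the side conditions are all in the right places: Krom-preservation gives $\card{C}\le 2$, satisfiability of $T\wedge S$ rules out $C\subseteq\overline{S}$ and hence forces $m\in C$, and $M\cap H=\emptyset$ guarantees $C$ is non-tautological and lies over $H\cup M$, so $C\in\TrimRes(T,H,M)$. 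No gaps.
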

We extend this notion by using $\textit{TrimRes}(T,H,M,\tau)\coloneqq$ $\textit{TrimRes}(T[\tau],H\setminus\var{\tau},M\setminus\var{\tau})$.
\begin{theorem}
\label{thm:krom-bds-solv}
Given an abduction instance $\cP$ of input size $n$ and a strong \krom-backdoor
$\cB$ of $\cP$ of size $k$, we can create in time $\bigO(2^kn^2)$ a \cnf formula $F_\text{\krom-Solv}$ of size $\bigO(2^k n^2)$ such that $F_\text{\krom-Solv}$ is satisfiable if and only if $\sol(\cP) \neq \emptyset$.
\end{theorem}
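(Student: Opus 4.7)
The plan is to mirror the Horn proof (Theorem~\ref{thm:horn-bds-solv}): first build a propositional formula $F_{\text{\krom-Solv}}'$ and then apply the Tseitin transformation to obtain $F_{\text{\krom-Solv}}$ of the required size. Write $F_{\text{\krom-Solv}}' \coloneqq T \wedge F^{\text{ent}}$, with the conjunct $T$ ensuring the consistency requirement of Lemma~\ref{lem:bds-solv} (any model of $F'$ restricts on $H$ to a set $S$ such that $T \cup S$ is consistent), and $F^{\text{ent}} \coloneqq \bigwedge_{i=1}^{2^k} \bigl((\bigwedge_{h \in H \cap \cB}(h \rightarrow B_i(h))) \rightarrow F_i^{\text{ent}}\bigr)$ enforcing entailment for every $\tau_i \in \ta{\cB,S}$; the guard is the same as in the Horn case. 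Propositional constants $B_i(v)$ mark whether $v \in \cB_i$, where $\cB_1,\dots,\cB_{2^k}$ enumerate the subsets of $\cB$.

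For each $i$ the plan is to precompute, in polynomial time, the set $\TrimRes(T,H,M,\tau_i)$ together with Boolean constants $c_{i,m}$ (indicating $\{m\}\in\TrimRes(T,H,M,\tau_i)$) and $\text{cons}_i$ (indicating whether $M[\tau_i]$ is consistent). Then define $F_i^{\text{ent}} \coloneqq F_i^{\text{sat}} \rightarrow F_i^{\text{entail}}$, where $F_i^{\text{sat}}$ is true exactly when $T[\tau_i]\cup S$ is satisfiable, and where $F_i^{\text{entail}}$ is $\bot$ if $\text{cons}_i$ is false, and otherwise is
\[
\bigwedge_{m \in M \setminus \cB}\Bigl(c_{i,m} \vee \bigvee_{h \in H \,:\, \{\neg h, m\} \in \TrimRes(T,H,M,\tau_i)} h\Bigr),
\]
which is a direct encoding of the characterisation provided by Lemma~\ref{lem:krom-prep-clauses}. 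This makes $F_i^{\text{ent}}$ vacuously true when $T[\tau_i]\cup S$ is unsatisfiable (covering the case where $M[\tau_i]$ is inconsistent and entailment is only witnessed trivially), and reduces to the TrimRes-based entailment check otherwise, exactly mirroring the case analysis of Algorithm~\ref{alg:krom-abd-bds-checker}.

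The key new component, not needed in the Horn case, is $F_i^{\text{sat}}$. I would encode it by adapting the layered construction used for $F_i^{\text{lm}}$, this time tracking unit propagation in the Krom theory $T[\tau_i]$ seeded by the positive units arising from $S$. For each variable $v \in V \setminus \cB$ and each step $0 \le j \le p$ (with $p = \min\{|T|,|V|\}$) introduce auxiliary variables $p_i^j[v]$ and $n_i^j[v]$, meaning that the literal $v$, respectively $\neg v$, has been derived after $j$ propagation steps. The base biequivalence initialises from the unit clauses of $T[\tau_i]$ and from the hypothesis variables in $H\setminus\cB$; the step biequivalence encodes, for every Krom clause $\{\ell_1,\ell_2\}\in T[\tau_i]$, that $\ell_2$ becomes derived once $\neg\ell_1$ has been derived (and symmetrically). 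Finally, set $F_i^{\text{sat}} \coloneqq \bigwedge_v \neg(p_i^p[v] \wedge n_i^p[v])$.

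Correctness then follows from Lemma~\ref{lem:bds-solv}, Lemma~\ref{lem:krom-prep-clauses}, and the standard completeness of unit propagation for refuting a Krom formula augmented with positive units. For the size, each of the $2^k$ copies contributes $O(n^2)$ clauses for the layered propagation and $O(n^2)$ clauses for the TrimRes-based entailment check, yielding the claimed $O(2^k n^2)$ total. The principal obstacle I expect is twofold: establishing (or recalling) the unit-propagation completeness in precisely the form used by the encoding, since this is where the Krom argument departs from the Horn argument (which relied on the unique least model); and checking carefully that the biequivalences in the layered encoding pin $F_i^{\text{sat}}$ to the correct value regardless of how the Tseitin auxiliary variables are chosen.
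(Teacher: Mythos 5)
Your outer skeleton---the guard $\bigwedge_{h\in H\cap\cB}(h\rightarrow B_i(h))\rightarrow F_i^\text{ent}$, the preprocessing of $\TrimRes(T,H,M,\tau_i)$ into constants, the case split on whether $M[\tau_i]$ is consistent, and the entailment check via Lemma~\ref{lem:krom-prep-clauses}---all match the paper's construction. The genuine gap is the component $F_i^\text{sat}$. Unit propagation is \emph{not} refutation-complete for \krom formulas; the ``standard completeness'' you invoke is a \horn fact (positive unit resolution) that does not transfer. The unsatisfiable \krom formula $\{x,y\},\{\neg x,y\},\{x,\neg y\},\{\neg x,\neg y\}$ has no unit clauses, so your layered variables $p_i^j[\cdot]$, $n_i^j[\cdot]$ are never seeded and $F_i^\text{sat}$ evaluates to true. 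This breaks the completeness direction of the theorem: take $V=\{m,x,y\}$, $H=\emptyset$, $M=\cB=\{m\}$, and $T=\{\{m,x,y\},\{m,\neg x,y\},\{m,x,\neg y\},\{m,\neg x,\neg y\}\}$. Then $S=\emptyset$ is a solution (every model of $T$ sets $m$ to true), but for the assignment $\tau_i$ with $\tau_i(m)=0$ the set $M[\tau_i]$ is inconsistent, so your $F_i^\text{ent}$ reduces to $\neg F_i^\text{sat}$, which is false because unit propagation cannot refute $T[\tau_i]$; hence your formula is unsatisfiable although $\sol(\cP)\neq\emptyset$. Note also that repairing this by encoding full implication-graph reachability inside the formula would overshoot the $\bigO(2^k n^2)$ size bound.

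The paper sidesteps any in-formula satisfiability test. Since full \krom resolution is polynomial and is already run in preprocessing, unsatisfiability of $T[\tau_i]\cup S$ (where $S$ contributes only positive unit clauses over $H$) is equivalent to a condition on precomputed constants: either $\Box\in\TrimRes(T,H,M,\tau_i)$, or $\{\neg h\}\in\TrimRes(T,H,M,\tau_i)$ for some $h\in S$, or $\{\neg h_1,\neg h_2\}\in\TrimRes(T,H,M,\tau_i)$ for some $h_1,h_2\in S$---every resolution refutation of a \krom formula plus positive units must pass through such a trimmed resolvent. This is exactly the disjunction $\psi_i^\text{ent}$ the paper uses in the branch where some $m\in M\cap\cB$ is assigned false. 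You already rely on $\TrimRes$ constants for the entailment check; the fix is to use them for the consistency check as well, replacing $\neg F_i^\text{sat}$ by this quadratic-size disjunction over $h$ and over pairs $h_1,h_2$.
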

\begin{proof}
By the same argument as in Theorem~\ref{thm:horn-bds-solv}, it suffices to create first a formula $F_\text{\krom-Solv}'$ which is not in \cnf.
Formula $F_\text{\krom-Solv}'$ is identical to formula $F_\text{\horn-Solv}'$ except for the subformula $F_i^\text{ent}$, for which a completely new approach is needed. %

Let $\cP=\abdinst{V,H,M,T}$ and let $\cB_1,\dots,\cB_{2^k}$ be an enumeration of all the subsets of $\cB$.
Each subset $\cB_i$ implicitly defines a truth assignment $\tau_i$ of $\cB$.
In a preprocessing step, for each assignment $\tau_i$ the function $\textit{TrimRes}(T,H,M,\tau_i)$ is computed.
In order, to connect the result of $\textit{TrimRes}(T,H,M,\tau_i)$ with the encoding we use logical constants which are defined as follows:
Let $C$ be a clause over $H\cup M$ or the empty clause $\Box$.
Then we define $\TR_i^C$ to be true if and only if $C\in\textit{TrimRes}(T,H,M,\tau_i)$.

We obtain the formula $F_\text{\krom-Solv}'$ from $F_\text{\horn-Solv}'$ by replacing $F_i^\text{ent}$, see Equation~\eqref{eq:krom-ent}, by:
\begin{align*}
\shortversion{F_i^\text{ent} &\coloneqq \Bigl(\;\bigwedge_{\mathclap{m\in M\cap \cB}}B_i(m)\ra \varphi_i^\text{ent}\;\Bigr) \wedge \Bigl(\;\bigvee_{\mathclap{m\in M\cap \cB}}\neg B_i(m)\ra \psi_i^\text{ent}\;\Bigr)\\}
\longversion{F_i^\text{ent} &\coloneqq \Bigl(\;\bigwedge_{m\in M\cap \cB}B_i(m)\ra \varphi_i^\text{ent}\;\Bigr) \wedge \Bigl(\;\bigvee_{m\in M\cap \cB}\neg B_i(m)\ra \psi_i^\text{ent}\;\Bigr)\\}
  \varphi_i^\text{ent} &\coloneqq \bigwedge_{m\in M\setminus \cB} \biggl( \TR_i^{\{m\}} \vee \bigvee_{h\in H} \left( h \wedge \TR_i^{\{h\ra m\}}\right)\biggr)\\
  \psi_i^\text{ent} &\coloneqq \TR_i^\Box\vee \bigvee_{h\in
    H}\left(h\wedge \TR_i^{\{\neg h\}}\right)\vee \mbox{} 
 \longversion{ \bigvee_{h_1,h_2\in H}\left(h_1\wedge h_2
     \wedge \TR_i^{\{\neg h_1,\neg h_2\}}\right) }
 \shortversion{\\ &\quad \bigvee_{h_1,h_2\in H}\left(h_1\wedge h_2
     \wedge \TR_i^{\{\neg h_1,\neg h_2\}}\right) }
\end{align*}
For each assignment $\tau_i$ (represented by $B_i(\cdot)$) we have to check whether the entailment $T[\tau_i]\cup S\vDash M[\tau_i]$ holds.
This is done in $F_i^\text{ent}$.
\shortversion{The proof of the correctness of this construction can found in the full version~\cite{arxiv-version}.}
\longversion{In the entailment check we need to distinguish between two cases.
The question is whether the assignment $\tau_i$ assigns false to some manifestation and thus ``disturbs'' the entailment.
In such a case the entailment can only be fulfilled if $T[\tau_i]\cup S$ is unsatisfiable.

Case (i): If $\bigwedge_{m\in M\cap \cB}B_i(m)$ is true, i.e., all manifestations contained in the backdoor set are set to true, it suffices to check whether for each manifestation $m\in M\setminus \cB$ either $\{m\}\in \textit{TrimRes}(T,H,M,\tau_i)$ or there is some $h\in S$ such that $\{h\ra m\}\in \textit{TrimRes}(T,H,M,\tau_i)$.
The correctness can be seen from Lemma~\ref{lem:krom-prep-clauses}.

Case (ii): If there is some $m\in M\cap \cB$ such that $B_i(m)$ is false, i.e., it is being set to false in the assignment $\tau_i$, we need to have a closer look at the unsatisfiability of $T[\tau_i] \cup S$.
There are three possibilities that can render  $T[\tau_i] \cup S$ unsatisfiable.
One can verify that in case $\bigvee_{m\in M\cap \cB}\neg B_i(m)$ is satisfiable the formula exactly checks these three conditions.
\begin{itemize}
\item $\Box\in\textit{TrimRes}(T,H,M,\tau_i)$, i.e., $T[\tau_i]$ is unsatisfiable.
\item For some $h$, which is set to true, $\{\neg h\}\in\textit{TrimRes}(T,H,M,\tau_i)$, i.e., setting $h$ to true is not consistent with $T[\tau_i]$.
\item For some pair of hypotheses $h_1,h_2\in H$, which are both set to true, $\{\neg h_1,\neg h_2\}\in\textit{TrimRes}(T,H,M,\tau_i)$, i.e., setting $h_1$ and $h_2$ to true is not consistent with $T[\tau_i]$.
\end{itemize}}
Note that $F_i^\text{ent}$ can be constructed in quadratic time.
Thus $F_\text{\krom-Solv}$ can be constructed in time and space of $\bigO(2^kn^2)$.
\end{proof}

\section{Subset Minimality}
\label{sec:subset}
In abductive reasoning one is often more interested in  subset-minimal
solutions than in ``ordinary'' solutions~\cite{EiterG95}.
Consider the example from the preliminaries.  Clearly, $S_3$ is a
solution but actually the hypothesis $\mathtt{warm}$ is dispensable.
In larger settings unnecessary hypotheses in the solution might blur
the explanation.  We demonstrate now how the previously presented
transformations can be modified to produce  only subset-minimal solutions
and thus solve \abdmin.

\begin{theorem}
\label{thm:bds-sub-solv}
Given an instance $\langle\cP,h^*\rangle$ for \abdmin of size $n$ and a strong \horn- or \krom-backdoor set $\cB$ of $\cP$ of size~$k$, we can create in time $\bigO(2^k n^2)$ a \cnf formula of size $\bigO(2^k n^2)$ that is satisfiable if and only if $\langle\cP,h^*\rangle$ is a yes-instance for \abdmin.
\end{theorem}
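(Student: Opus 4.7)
The plan is to start from the transformation in Theorem~\ref{thm:horn-bds-solv} or Theorem~\ref{thm:krom-bds-solv} and augment it with two additional ingredients: a unit clause enforcing that $h^*$ appears in the chosen solution, and a ``subset-minimality'' module which certifies that no proper subset of the encoded $S$ would also be a solution. Writing $F_\text{Solv}$ for the already-constructed solution-checker formula (of size $\bigO(2^k n^2)$), I would build
\[
F'_\text{Min-Solv} \coloneqq F_\text{Solv} \wedge h^* \wedge \bigwedge_{h \in H} \text{NonEnt}_h,
\]
where each module $\text{NonEnt}_h$ expresses that if $h$ lies in the encoded solution $S$, then $S \setminus \{h\}$ is not a solution.

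The main conceptual step is to reduce subset-minimality to a purely existential side condition. If $S$ is a solution then $T \cup S$ is consistent, hence $T \cup (S \setminus \{h\})$ is consistent for every $h \in S$. Consequently, $S \setminus \{h\}$ fails to be a solution if and only if $T \cup (S \setminus \{h\}) \nvDash M$, which in turn is equivalent to the existence of a single assignment satisfying $T \wedge (S \setminus \{h\}) \wedge \bigvee_{m \in M} \neg m$. So, unlike the original entailment check, no backdoor machinery or universal quantification over $\ta{\cB,S}$ is needed to rule out $S \setminus \{h\}$---a plain satisfying assignment suffices.

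To encode the existence of these witnesses, for each $h \in H$ I would introduce a fresh copy $V^h \coloneqq \{v^h \mid v \in V\}$ of the variables of $T$ and write, using the original \cnf clauses of $T$ translated literal-by-literal into the copies,
\[
\text{NonEnt}_h \coloneqq h \ra \Bigl(T^h \wedge \bigwedge_{h' \in H \setminus \{h\}} \bigl(h' \ra h'^{h}\bigr) \wedge \bigvee_{m \in M} \neg m^h\Bigr).
\]
Here $T^h$ denotes $T$ with every variable $v$ replaced by its copy $v^h$, the outer gating by $h$ activates the witness only when $h$ belongs to $S$, and the implications $h' \ra h'^{h}$ force the witness to agree with the main assignment on the remaining hypotheses in $S$. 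The final \cnf formula $F_\text{Min-Solv}$ is then obtained by applying the Tseitin transformation to $F'_\text{Min-Solv}$, exactly as in the proof of Theorem~\ref{thm:horn-bds-solv}.

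For the size bound, each module $\text{NonEnt}_h$ consists of one copy of $T$ together with $|H|+1$ additional disjunctions, so its size is $\bigO(n)$; summing over the $|H| \leq n$ hypotheses yields $\bigO(n^2)$, which is absorbed by the $\bigO(2^k n^2)$ contribution of $F_\text{Solv}$. The main technical obstacle I anticipate is a careful correctness argument: one has to check that the gating behaves as intended when $h \notin S$ (the variables $V^h$ then remain entirely unconstrained, which is harmless), and that the witness encoding faithfully captures non-entailment even though the copy theory $T^h$ is not itself \horn or \krom---the point being that only its satisfiability by a single assignment is required, so no tractable-class structure is needed for the copies.
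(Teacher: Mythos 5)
Your overall architecture---gate a per-hypothesis non-entailment module on membership of $h$ in the solution, realize each module with a fresh copy $V^h$ of the variables and a single existential witness for $T^h\wedge(S\setminus\{h\})^h\wedge\bigvee_{m\in M}\neg m^h$, observe that deleting one element at a time suffices for subset-minimality and that the copies need no backdoor machinery---is exactly the paper's, and your size analysis is right. However, there is a genuine gap: you keep the convention of Theorem~\ref{thm:horn-bds-solv} that the solution $S$ is the set of hypothesis variables that are \emph{true} in the model, and you gate $\text{NonEnt}_h$ on the variable $h$ itself. Since $F_\text{Solv}$ contains $T$ as a conjunct, the theory can force hypotheses into the encoded $S$ that you did not intend to select, and the minimality check then fails on yes-instances. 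Concretely, take $V=\{h_1,h_2,m\}$, $H=\{h_1,h_2\}$, $M=\{m\}$, $T=\{h_1\ra h_2,\ h_2\ra m\}$ (both \horn and \krom, so $\cB=\emptyset$ and $k=0$), and $h^*=h_1$. Here $\{h_1\}$ is a subset-minimal solution containing $h^*$, so the answer is yes; but in your formula the unit clause $h^*$ together with $T$ forces $h_2$ to be true, so the encoded solution is necessarily $\{h_1,h_2\}$, and $\text{NonEnt}_{h_2}$ then demands a witness for $T^{h_2}\wedge h_1^{h_2}\wedge\neg m^{h_2}$, which does not exist. Your formula is unsatisfiable on a yes-instance.

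The paper avoids this by decoupling the selection of $S$ from the truth values of the hypotheses: it introduces fresh selector variables $s_h$ with only the one-way implication $\bigwedge_{h\in H}(s_h\ra h)$, replaces $h$ by $s_h$ throughout the entailment machinery (in the guard of $F^\text{ent}$ and in the initialization $F_i^{(v,0)}$ of the least-model computation, and analogously in the \krom encoding), asserts $s_{h^*}$ rather than $h^*$, and gates the non-entailment module on $s_h$, with the witness constrained by $s_v\ra v^h$ for $v\in H\setminus\{h\}$. With this change a model may set $h_2$ true while $s_{h_2}$ is false, so $S=\{h_1\}$ becomes representable and the counterexample disappears. Your construction is correct once this decoupling is added; the remaining minor deviation (you omit the conjunct $\neg h^h$ from the witness, which the paper includes) is harmless, since any witness for $T^h\wedge(S\setminus\{h\})^h\wedge\bigvee_{m\in M}\neg m^h$ must already falsify $h^h$ whenever $S$ is a solution.
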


\begin{proof}
In order to construct a formula $F_\text{\horn-$\subsetneq$-Solv}$ in case of a strong \horn-backdoor set, we modify $F_\text{\horn-Solv}'$ from Theorem~\ref{thm:horn-bds-solv}.
The first change is that a solution $S \subseteq H$ is no longer represented by the restriction of a model to the variables in $H$.
Instead we introduce a new propositional variable $s_h$ for each $h \in H$ with the intended meaning of $s_h$ being true if and only if $h \in S$.
Let $V_S$ denote the set of these variables.
\[
  F_\text{\horn-Solv}' \coloneqq \bigwedge_{h \in H} (s_h \rightarrow h) \wedge T \wedge F^\text{ent}.
\]
The new conjunction $\bigwedge_{h \in H} (s_h \rightarrow h)$ ensures that the choice over the variables $V_S$ is propagated to the variables representing the hypotheses.
Furthermore, we change $F^\text{ent}$ and $F_i^{(v,0)}$ so that these formulas use the new variables $V_S$:
\begin{align*}
  F^\text{ent} & \coloneqq \bigwedge_{1\leq i \leq 2^k} \Bigl(\bigl(\bigwedge_{h\in H\cap \cB}(s_h \rightarrow B_i(h))\bigr) \rightarrow F_i^\text{ent}\Bigr);\\
  F_i^{(v,0)} & \coloneqq
    \begin{cases}
      u_i^0[v] \leftrightarrow s_h & \text{if } v=h \in H,\\
      u_i^0[v] \leftrightarrow \text{false} & \text{otherwise}.
    \end{cases}
\end{align*}
The other subformulas from the proof of Theorem~\ref{thm:horn-bds-solv} remain unchanged.
We can now write $F_\text{\horn-$\subsetneq$-Solv}'$ as follows.
\begin{align*}
  F_\text{\horn-$\subsetneq$-Solv}' & \coloneqq  s_{h^*} \wedge F_\text{\horn-Solv}' \wedge \bigwedge_{h \in H} (s_h \rightarrow F_\text{non-ent}^h);\\
  F_\text{non-ent}^h & \coloneqq \neg h^h \wedge \bigwedge_{v \in H\setminus \{h\}} (s_v \rightarrow v^h) \wedge T^h \wedge \overline{M}^h.
\end{align*}
where for each $h\in H$ the formula $F_\text{non-ent}^h$ enforces that
for $S\setminus \{h\}$ the manifestations $M$ are no longer entailed.
Thereby we introduce for each $h \in H$ a new copy $v^h$ of each
variable $v\in V$. Let $T^h$ and $\overline{M}^h$ denote $T$ and
$\overline{M}$, respectively, where the variables are replaced by the
new copies.
It remains to observe that the formula $F_\text{\horn-$\subsetneq$-Solv}'$ can be constructed in time $\bigO(2^k n^2)$.

The construction of the formula $F_\text{\krom-$\subsetneq$-Solv}$ in case of a strong \krom-backdoor set is analogous.
We modify the formula $F_\text{\krom-Solv}'$ from Theorem~\ref{thm:krom-bds-solv} similarly to the \horn case above.
Again we use new variables $s_h$ to decouple the solution from the hypotheses.
The important change is to replace the subformula $F_\text{\horn-Solv}'$ by a decoupled version of $F_\text{\krom-Solv}'$.
This decoupling is achieved by adding the clauses $\bigwedge_{h \in H} (s_h \rightarrow h)$ and replacing each occurrence of $h$ by $s_h$, and $h_1,h_2$ by $s_{h_1},s_{h_2}$ accordingly.
\end{proof}

\section{Completeness for \paraNP}

A parameterized problem $L$ is contained in the parameterized complexity class \paraNP if $L$ can be decided by a \emph{nondeterministic} fpt-algorithm~\cite{FlumG03}.

For a non-parameterized problem that is \NP-complete, it is considered a bad result if adding a parameter makes it \paraNP-complete, since this indicates that the considered parameter does not help.
But in the case of abduction, which is \SigmaTwo-complete, showing that it becomes \paraNP-complete is indeed a positive result.
In fact we get the following result as a corollary to Theorems~\ref{thm:horn-bds-solv}, \ref{thm:krom-bds-solv}, and \ref{thm:bds-sub-solv}.

\begin{corollary}
  For $\mathcal{C}\in \{\horn,\krom\}$, the problems \abd and \abdmin
  are \paraNP-complete when parameterized by the size of a smallest
  strong $\mathcal{C}$-backdoor set of the given abduction instance.
\end{corollary}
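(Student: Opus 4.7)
The plan is to establish the two sides of \paraNP-completeness separately, using the fpt-reductions constructed earlier for membership and exploiting the classical \NP-hardness of the unparameterized restricted problems for hardness.

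For membership, I would invoke Theorems~\ref{thm:horn-bds-solv}, \ref{thm:krom-bds-solv}, and \ref{thm:bds-sub-solv}. Each of these gives, in time $\bigO(2^k n^2)$, a \cnf formula of size $\bigO(2^k n^2)$ whose satisfiability is equivalent to the original \abd or \abdmin instance being a yes-instance. Composing such an fpt-reduction to \sat with a nondeterministic polynomial-time algorithm for \sat (guess a truth assignment, verify it in polynomial time) yields a nondeterministic fpt-algorithm for the parameterized problem, which is exactly the definition of \paraNP. Since the parameter of the produced \sat instance is irrelevant (\sat is in \NP), the composition is straightforward; the only thing to note is that the detection phase for backdoor sets is fixed-parameter tractable by~\cite{NishimuraRagdeSzeider04-informal}, so we may assume the backdoor set is given.

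For \paraNP-hardness, I would use the well-known criterion that a parameterized problem is \paraNP-hard whenever its unparameterized version is \NP-hard for some fixed value of the parameter~\cite{FlumG03}. Here, an abduction instance whose theory $T$ already lies in the class $\mathcal{C} \in \{\horn,\krom\}$ admits the empty set as a strong $\mathcal{C}$-backdoor set, so the parameter value $k=0$ captures exactly the restriction of \abd (respectively \abdmin) to $\mathcal{C}$-theories. By the known classifications recalled in the preliminaries~\cite{EiterG95,NordhZ08,SelmanL90}, both \abd and \abdmin are \NP-complete on \horn and on \krom theories. Thus both problems are already \NP-hard for the fixed parameter value $k=0$, which directly implies \paraNP-hardness.

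I do not expect any genuine obstacle in this argument, as the two ingredients are essentially plug-in: the fpt-reductions to \sat have already been done in the preceding theorems, and the classical \NP-hardness for \horn/\krom theories is recalled explicitly in the preliminaries. The only care needed is to verify that the slice $k=0$ really does correspond to theories in the base class (which it does, since $\varphi[\tau] = \varphi$ for the empty assignment) so that the \paraNP-hardness transfer is legitimate.
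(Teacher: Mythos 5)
Your proposal is correct and takes essentially the approach the paper intends: the paper states this result as an immediate corollary of Theorems~\ref{thm:horn-bds-solv}, \ref{thm:krom-bds-solv}, and~\ref{thm:bds-sub-solv} (which give the nondeterministic fpt-algorithm, i.e., \paraNP membership), and the hardness half is exactly your slice argument, since the $k=0$ slice coincides with \abd and \abdmin restricted to \horn/\krom theories, whose \NP-completeness is recalled in the preliminaries. Both ingredients check out, including the observation that the empty backdoor set witnesses $k=0$ precisely for theories already in the base class.
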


\section{Enumeration and Further Extensions}

In this section we sketch how the transformations presented above can
be used to \emph{enumerate} all (subset-minimal) solutions.

Obtaining a solution to the abduction instance from the models returned by the \sat solver is straightforward.
For the formulas $F_\text{\horn-$\subsetneq$-Solv}$ and $F_\text{\krom-$\subsetneq$-Solv}$ (Theorem~\ref{thm:bds-sub-solv}) it suffices to restrict the models to the variables $s_h$, for all $h\in H$.
In order to enumerate all possible solutions, one can exclude already found solutions by adding appropriate clauses that eliminate exactly these models.

The formulas $F_\text{\horn-Solv}$ and $F_\text{\krom-Solv}$ (Theorems~\ref{thm:horn-bds-solv} and~\ref{thm:krom-bds-solv}) need to be slightly modified, since a solution $S \subseteq H$ is not expressed explicitly.
This is for example a problem, if an hypothesis $h_1$ implies another hypothesis $h_2$, because no solution candidate containing only $h_1$ but not $h_2$ will be considered.
This problem does not occur in the formulas $F_\text{\horn-$\subsetneq$-Solv}'$ and $F_\text{\krom-$\subsetneq$-Solv}'$, since there the encoding of a solution is decoupled from the hypotheses via $\bigwedge_{h \in H} (s_h \rightarrow h)$.
The same technique can be used in the encoding of $F_\text{\horn-Solv}'$ and $F_\text{\krom-Solv}'$.
Note, that several occurrences of hypotheses $h$ have to be changed to $s_h$.

The transformations from Sections~\ref{sec:horn} and~\ref{sec:krom}
can be extended easily to the corresponding \emph{relevance problem},
i.e., asking whether there exists some $S\in \sol(\cP)$ containing a
certain $h^*\in H$.  It suffices to check only those $S\subseteq H$
where $h^*\in S$.
\begin{corollary}
  Let $\mathcal{C}\in \{\horn,\krom\}$.  Given an abduction instance
  $\cP=\abdinst{V,H,M,T}$, an atom $h^*\in H$, and a strong
  $\mathcal{C}$-backdoor set of~$\cP$, we can decide whether $h^*$
  belongs to some solution to $\cP$ in time
  $\bigO^*(2^{\card{\cB}+\card{H}})$.  Thus, the relevance problem is
  fixed-parameter tractable when parameterized by
  $\card{\cB}+\card{H}$.
  This also holds  for \abdmin.
\end{corollary}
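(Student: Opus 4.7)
The plan is to adapt the fixed-parameter tractable algorithm underlying the earlier corollaries (which enumerate all $2^{\card{H}}$ candidates $S\subseteq H$ and apply Algorithm~\ref{alg:horn-abd-bds-checker} or Algorithm~\ref{alg:krom-abd-bds-checker} to each in time $\bigO^*(2^{\card{\cB}})$) by restricting the candidate space to sets containing $h^*$. For \abd, I would iterate over the $2^{\card{H}-1}$ subsets $S\subseteq H$ with $h^*\in S$ and invoke the appropriate solution checker on each; as soon as one candidate is confirmed to be a solution, output yes. Correctness is immediate from the definition of the relevance problem, and the running time is $2^{\card{H}-1}\cdot \bigO^*(2^{\card{\cB}})=\bigO^*(2^{\card{\cB}+\card{H}})$, yielding fixed-parameter tractability with parameter $\card{\cB}+\card{H}$.

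For the subset-minimal variant, the plan is to first precompute, for every $S\subseteq H$ (not only those containing $h^*$), the Boolean value \emph{``$S\in\sol(\cP)$''}, storing the answers in a lookup table. By the same reasoning as above, this precomputation can be carried out in $\bigO^*(2^{\card{\cB}+\card{H}})$ time using Algorithm~\ref{alg:horn-abd-bds-checker} or Algorithm~\ref{alg:krom-abd-bds-checker}. Then I would iterate over those $S\subseteq H$ with $h^*\in S$ for which the table indicates $S\in\sol(\cP)$, and for each such $S$ verify subset-minimality by looking up in the table whether some $S'\subsetneq S$ also belongs to $\sol(\cP)$. If a subset-minimal solution containing $h^*$ is discovered, answer yes; otherwise, no.

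The main obstacle is the minimality-verification step, which in the worst case requires iterating over all $S'\subsetneq S$ for each solution $S$ and therefore involves up to $\bigO(2^{2\card{H}})$ table lookups; at first glance this appears to push the bound beyond $\bigO^*(2^{\card{\cB}+\card{H}})$. However, $2^{2\card{H}}$ is still an exponential function of the parameter $\card{\cB}+\card{H}$ and every lookup is polynomial in the input, so the overall running time remains of the form $f(\card{\cB}+\card{H})\cdot n^{\bigO(1)}$, which suffices for \FPT. Hence both the relevance variant of \abd and of \abdmin are fixed-parameter tractable when parameterized by $\card{\cB}+\card{H}$, as claimed.
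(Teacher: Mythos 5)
Your treatment of the relevance version of \abd coincides with the paper's: the paper justifies this corollary only by the remark that it suffices to check those $S\subseteq H$ with $h^*\in S$, which is exactly your enumeration of the $2^{\card{H}-1}$ candidates combined with Algorithm~\ref{alg:horn-abd-bds-checker} or Algorithm~\ref{alg:krom-abd-bds-checker}, giving $\bigO^*(2^{\card{\cB}+\card{H}})$. That part is fine.

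For \abdmin there is a gap relative to the statement as written. The corollary claims the running time $\bigO^*(2^{\card{\cB}+\card{H}})$ also for \abdmin, but your minimality test ranges over all pairs $S'\subsetneq S$, which amounts to $\Theta(3^{\card{H}})$ table lookups (you bound it even more coarsely by $2^{2\card{H}}$); already for $\cB=\emptyset$ this exceeds $\bigO^*(2^{\card{\cB}+\card{H}})=\bigO^*(2^{\card{H}})$. You correctly note that the procedure is still an fpt-algorithm, but the quantitative bound is part of the claim, so your argument only proves a weaker statement. The gap is easy to close: there is a subset-minimal solution containing $h^*$ if and only if there is a solution $S$ with $h^*\in S$ such that $S\setminus\{h^*\}$ is \emph{not} a solution. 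Indeed, if $S$ is such a set, pick any inclusion-minimal solution $S''\subseteq S$; if $h^*\notin S''$ then $S''\subseteq S\setminus\{h^*\}$, and since $T\cup S''\vDash M$ forces $T\cup(S\setminus\{h^*\})\vDash M$ while consistency of $T\cup S$ passes to its subset $T\cup(S\setminus\{h^*\})$, the set $S\setminus\{h^*\}$ would be a solution, a contradiction; the converse direction is immediate by taking $S$ to be the subset-minimal solution itself. With this characterization each candidate $S\ni h^*$ requires only two invocations of the solution checker, and the claimed $\bigO^*(2^{\card{\cB}+\card{H}})$ bound follows.
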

Furthermore, the \sat encoding allows us to easily restrict solutions
for \abd or \abdmin in terms of any constraints that are expressible
in \cnf.  For example, using the encoding of a counter~\cite{Sinz05},
we can restrict the cardinalities of solutions and therefore solve the
variants of \abd as proposed by~\citex{FellowsPRR12}.  In contrast,
adding these constraints directly to the theory of the Abduction
instance can increase the size of the backdoor set.

\section{Conclusion}

We have presented fixed-parameter tractable transformations from
various kinds of abduction-based reasoning problems to \sat that
utilize small \horn/\krom-backdoor sets in the input. These
transformations are \emph{complexity barrier breaking reductions} as
they reduce problems from the second level of the Polynomial Hierarchy
to the first level.  A key feature of our transformations is that the
exponential blowup of the target \sat instance can be confined in
terms of the size of a smallest backdoor set of the input theory, a
number that measures the distance to the ``nice'' classes of \horn and
\krom formulas.
There are various possibilities for further reducing the size of the
target instance, which would be important for a practical
implementation.  For instance, one could use more sophisticated
computations of the least model combined with target languages that are
more compact than propositional
\cnf~\cite{Janhunen04,JanhunenNS09,ThiffaultBacchusWalsh04}.
An extension of our approach to Abduction with other notions of
solution-minimality, as surveyed by \citex{EiterG95}, is left for
future work.
Adding empty clause detection can lead to smaller backdoors~\cite{DilkinaGS07} and thus making our approach applicable to a larger class of instances.
While finding such backdoors is not fixed-parameter tractable~\cite{Szeider09}, one could use heuristics to compute them~\cite{DilkinaGS07}.

\shortversion{

}

\longversion{\bibliographystyle{named}
\bibliography{abd-bs}

\begin{thebibliography}{}

\bibitem[\protect\citeauthoryear{Bylander \bgroup \em et al.\egroup
  }{1991}]{BylanderATJ91}
Tom Bylander, Dean Allemang, Michael~C. Tanner, and John~R. Josephson.
\newblock The computational complexity of abduction.
\newblock {\em Artif. Intell.}, 49(1-3):25--60, 1991.

\bibitem[\protect\citeauthoryear{Chen \bgroup \em et al.\egroup
  }{2010}]{ChenKanjXia10}
Jianer Chen, Iyad~A. Kanj, and Ge~Xia.
\newblock Improved upper bounds for vertex cover.
\newblock {\em Theoretical Computer Science}, 411(40--42):3736--3756, 2010.

\bibitem[\protect\citeauthoryear{Dilkina \bgroup \em et al.\egroup
  }{2007}]{DilkinaGS07}
Bistra~N. Dilkina, Carla~P. Gomes, and Ashish Sabharwal.
\newblock Tradeoffs in the complexity of backdoor detection.
\newblock In {\em Proc.\ CP 2007}, volume 4741 of {\em LNCS}, pages 256--270.
  Springer, 2007.

\bibitem[\protect\citeauthoryear{Dowling and Gallier}{1984}]{DowlingG84}
William~F. Dowling and Jean~H. Gallier.
\newblock Linear-time algorithms for testing the satisfiability of
  propositional {Horn} formulae.
\newblock {\em J. Logic Programming}, 1(3):267--284, 1984.

\bibitem[\protect\citeauthoryear{Downey and Fellows}{1999}]{book/DowneyF99}
Rodney~G. Downey and Michael~R. Fellows.
\newblock {\em Parameterized Complexity}.
\newblock Springer, 1999.

\bibitem[\protect\citeauthoryear{Dvor{\'a}k \bgroup \em et al.\egroup
  }{2012}]{DvorakOrdyniakSzeider12}
Wolfgang Dvor{\'a}k, Sebastian Ordyniak, and Stefan Szeider.
\newblock Augmenting tractable fragments of abstract argumentation.
\newblock {\em Artificial Intelligence}, 186:157--173, 2012.

\bibitem[\protect\citeauthoryear{Egly \bgroup \em et al.\egroup
  }{2000}]{EglyETW00}
Uwe Egly, Thomas Eiter, Hans Tompits, and Stefan Woltran.
\newblock Solving advanced reasoning tasks using quantified {Boolean} formulas.
\newblock In {\em Proc.\ AAAI 2000}, pages 417--422, 2000.

\bibitem[\protect\citeauthoryear{Eiter and Gottlob}{1995}]{EiterG95}
Thomas Eiter and Georg Gottlob.
\newblock The complexity of logic-based abduction.
\newblock {\em J. of the ACM}, 42(1):3--42, 1995.

\bibitem[\protect\citeauthoryear{Fellows \bgroup \em et al.\egroup
  }{2012}]{FellowsPRR12}
Michael~R. Fellows, Andreas Pfandler, Frances~A. Rosamond, and Stefan
  R\"ummele.
\newblock The parameterized complexity of abduction.
\newblock In {\em Proc.\ AAAI 2012}, pages 743--749, 2012.

\bibitem[\protect\citeauthoryear{Fichte and Szeider}{2011}]{FichteS11}
Johannes~K. Fichte and Stefan Szeider.
\newblock Backdoors to tractable answer-set programming.
\newblock In {\em Proc.\ IJCAI 2011}, pages 863--868, 2011.

\bibitem[\protect\citeauthoryear{Fichte and Szeider}{2013}]{FichteS13}
Johannes~K. Fichte and Stefan Szeider.
\newblock Backdoors to normality for disjunctive logic programs.
\newblock In {\em Proc.\ AAAI 2013}, 2013.
\newblock To appear. A preliminary version was presented at ASPOCP 2012.

\bibitem[\protect\citeauthoryear{Flum and Grohe}{2003}]{FlumG03}
J{\"o}rg Flum and Martin Grohe.
\newblock Describing parameterized complexity classes.
\newblock {\em Inf. Comput.}, 187(2):291--319, 2003.

\bibitem[\protect\citeauthoryear{Flum and Grohe}{2006}]{book/FlumG06}
J{\"o}rg Flum and Martin Grohe.
\newblock {\em Parameterized Complexity Theory}.
\newblock Springer, 2006.

\bibitem[\protect\citeauthoryear{Gaspers and Szeider}{2012a}]{GaspersS12}
Serge Gaspers and Stefan Szeider.
\newblock Backdoors to acyclic {SAT}.
\newblock In {\em Proc.\ ICALP 2012}, volume 7391 of {\em LNCS}, pages
  363--374. Springer, 2012.

\bibitem[\protect\citeauthoryear{Gaspers and Szeider}{2012b}]{GaspersSzeider12}
Serge Gaspers and Stefan Szeider.
\newblock Backdoors to satisfaction.
\newblock In Hans~L. Bodlaender, Rod Downey, Fedor~V. Fomin, and D{\'a}niel
  Marx, editors, {\em The Multivariate Algorithmic Revolution and Beyond},
  volume 7370 of {\em LNCS}, pages 287--317. Springer, 2012.

\bibitem[\protect\citeauthoryear{Gomes \bgroup \em et al.\egroup
  }{2008}]{GomesKSS08}
Carla~P. Gomes, Henry Kautz, Ashish Sabharwal, and Bart Selman.
\newblock Satisfiability solvers.
\newblock In Frank van Harmelen, Vladimir Lifschitz, and Bruce Porter, editors,
  {\em Handbook of Knowledge Representation}, pages 89--134. Elsevier, 2008.

\bibitem[\protect\citeauthoryear{Gottlob and Szeider}{2008}]{GottlobS08}
Georg Gottlob and Stefan Szeider.
\newblock Fixed-parameter algorithms for artificial intelligence, constraint
  satisfaction and database problems.
\newblock {\em Comput. J.}, 51(3):303--325, 2008.

\bibitem[\protect\citeauthoryear{Gottlob \bgroup \em et al.\egroup
  }{2010}]{GottlobPW10}
Georg Gottlob, Reinhard Pichler, and Fang Wei.
\newblock Bounded treewidth as a key to tractability of knowledge
  representation and reasoning.
\newblock {\em Artificial Intelligence}, 174(1):105--132, 2010.

\bibitem[\protect\citeauthoryear{Janhunen \bgroup \em et al.\egroup
  }{2009}]{JanhunenNS09}
Tomi Janhunen, Ilkka Niemel{\"a}, and Mark Sevalnev.
\newblock Computing stable models via reductions to difference logic.
\newblock In {\em Proc.\ LPNMR 2009}, volume 5753 of {\em LNCS}, pages
  142--154. Springer, 2009.

\bibitem[\protect\citeauthoryear{Janhunen}{2004}]{Janhunen04}
Tomi Janhunen.
\newblock Representing normal programs with clauses.
\newblock In {\em Proc.\ ECAI 2004}, volume~16, pages 358--362. IOS Press,
  2004.

\bibitem[\protect\citeauthoryear{J{\"a}rvisalo \bgroup \em et al.\egroup
  }{2012}]{JarvisaloEtal2012}
Matti J{\"a}rvisalo, Daniel~Le Berre, Olivier Roussel, and Laurent Simon.
\newblock The international {SAT} solver competitions.
\newblock {\em AI Magazine}, 33(1):89--92, 2012.

\bibitem[\protect\citeauthoryear{Katebi \bgroup \em et al.\egroup
  }{2011}]{KatebiEtal11}
Hadi Katebi, Karem~A. Sakallah, and Jo{\~a}o P.~Marques Silva.
\newblock Empirical study of the anatomy of modern {SAT} solvers.
\newblock In {\em Proc.\ SAT 2011}, volume 6695 of {\em LNCS}, pages 343--356.
  Springer, 2011.

\bibitem[\protect\citeauthoryear{Lackner and
  Pfandler}{2012a}]{LacknerP12-closedworld}
Martin Lackner and Andreas Pfandler.
\newblock Fixed-parameter algorithms for closed world reasoning.
\newblock In {\em Proc.\ ECAI 2012}, pages 492--497, 2012.

\bibitem[\protect\citeauthoryear{Lackner and
  Pfandler}{2012b}]{LacknerP12-minimalmodels}
Martin Lackner and Andreas Pfandler.
\newblock Fixed-parameter algorithms for finding minimal models.
\newblock In {\em Proc.\ KR 2012}, pages 85--95, 2012.

\bibitem[\protect\citeauthoryear{Leitsch}{1997}]{Leitsch97}
Alexander Leitsch.
\newblock {\em The resolution calculus}.
\newblock Texts in theoretical computer science. Springer, 1997.

\bibitem[\protect\citeauthoryear{Ng and Mooney}{1992}]{NGM92}
Hwee~T. Ng and Raymond~J. Mooney.
\newblock Abductive plan recognition and diagnosis: A comprehensive empirical
  evaluation.
\newblock In {\em Proc.\ KR 92}, pages 499--508. Morgan Kaufmann, 1992.

\bibitem[\protect\citeauthoryear{Niedermeier and
  Rossmanith}{2003}]{NiedermeierRossmanith03}
Rolf Niedermeier and Peter Rossmanith.
\newblock An efficient fixed-parameter algorithm for 3-hitting set.
\newblock {\em J. Discrete Alg.}, 1(1):89--102, 2003.

\bibitem[\protect\citeauthoryear{Nishimura \bgroup \em et al.\egroup
  }{2004}]{NishimuraRagdeSzeider04-informal}
Naomi Nishimura, Prabhakar Ragde, and Stefan Szeider.
\newblock Detecting backdoor sets with respect to {Horn} and binary clauses.
\newblock In {\em Proc.\ SAT 2004}, pages 96--103, 2004.

\bibitem[\protect\citeauthoryear{Nordh and Zanuttini}{2008}]{NordhZ08}
Gustav Nordh and Bruno Zanuttini.
\newblock What makes propositional abduction tractable.
\newblock {\em Artificial Intelligence}, 172(10):1245--1284, 2008.

\bibitem[\protect\citeauthoryear{Pople}{1973}]{Pople73}
Harry~E. Pople.
\newblock On the mechanization of abductive logic.
\newblock In {\em Proc.\ IJCAI 1973}, pages 147--152, 1973.

\bibitem[\protect\citeauthoryear{Samer and Szeider}{2009}]{SamerS09}
Marko Samer and Stefan Szeider.
\newblock Backdoor sets of quantified {Boolean} formulas.
\newblock {\em J. Autom. Reasoning}, 42(1):77--97, 2009.

\bibitem[\protect\citeauthoryear{Selman and Levesque}{1990}]{SelmanL90}
Bart Selman and Hector~J.\ Levesque.
\newblock Abductive and default reasoning: A computational core.
\newblock In {\em Proc.\ AAAI 1990}, pages 343--348, 1990.

\bibitem[\protect\citeauthoryear{Sinz}{2005}]{Sinz05}
Carsten Sinz.
\newblock Towards an optimal {CNF} encoding of {Boolean} cardinality
  constraints.
\newblock In {\em Proc.\ CP 2005}, volume 3709 of {\em LNCS}, pages 827--831.
  Springer, 2005.

\bibitem[\protect\citeauthoryear{Szeider}{2009}]{Szeider09}
Stefan Szeider.
\newblock Matched formulas and backdoor sets.
\newblock {\em J. on Satisfiability, Boolean Modeling and Computation},
  6:1--12, 2009.

\bibitem[\protect\citeauthoryear{Thiffault \bgroup \em et al.\egroup
  }{2004}]{ThiffaultBacchusWalsh04}
Christian Thiffault, Fahiem Bacchus, and Toby Walsh.
\newblock Solving non-clausal formulas with {DPLL} search.
\newblock In {\em Proc.\ CP 2004}, volume 3258 of {\em LNCS}, pages 663--678.
  Springer, 2004.

\bibitem[\protect\citeauthoryear{Tseitin}{1968}]{Tseitin68}
Grigori~S. Tseitin.
\newblock On the complexity of derivations in the propositional calculus.
\newblock {\em Studies in Constructive Mathematics and Mathematical Logic},
  Part II:115--125, 1968.

\bibitem[\protect\citeauthoryear{Williams \bgroup \em et al.\egroup
  }{2003}]{WilliamsGomesSelman03}
Ryan Williams, Carla Gomes, and Bart Selman.
\newblock Backdoors to typical case complexity.
\newblock In {\em Proc.\ IJCAI 2003}, pages 1173--1178, 2003.

\end{thebibliography}
}  
\end{document}